\documentclass[10pt]{article}

\usepackage{amsmath,amsthm,verbatim,amssymb,amsfonts,amscd, graphicx, enumitem}
\usepackage{graphics}
\usepackage{centernot}
\usepackage{authblk}
\topmargin0.0cm
\headheight0.0cm
\headsep0.0cm
\oddsidemargin0.0cm
\textheight23.0cm
\textwidth16.5cm
\footskip1.0cm
\theoremstyle{plain}
\newtheorem{theorem}{Theorem}

\newtheorem{conjecture}{Conjecture}
 
\newtheorem{definition}{Definition}

\usepackage[colorlinks,linkcolor=blue,citecolor=blue]{hyperref}

\setlength{\fboxsep}{0pt}
\setlength{\fboxrule}{.1pt}

\usepackage[bottom]{footmisc}
\usepackage{caption}
\usepackage{subcaption}
\usepackage{helvet}  
\usepackage{courier}  
\usepackage{url}  
\usepackage{graphicx}  
\usepackage{multirow}
\usepackage{amsthm}
\usepackage{color}
\usepackage{MnSymbol}
\usepackage{makecell}
\usepackage{arydshln}
\usepackage{amsmath}
\usepackage[dvipsnames]{xcolor}
\usepackage{caption} 
\usepackage{bbm}

\usepackage{textcomp}
\usepackage{wrapfig}
\usepackage{algorithm}
\usepackage{algorithmic}

\usepackage{csquotes}

\usepackage[many]{tcolorbox}    	
\usepackage{setspace}               
\usepackage{multicol}               


\definecolor{main}{HTML}{5989cf}    
\definecolor{sub}{HTML}{cde4ff}     

\tcbset{
    sharp corners,
    colback = white,
    before skip = 0.2cm,    
    after skip = 0.5cm      
}                           

\newtcolorbox{boxA}{
    fontupper = \bf,
    boxrule = 1.5pt,
    colframe = black 
}

\newtcolorbox{boxB}{
    fontupper = \bf\color{main}, 
    boxrule = 1.5pt,
    colframe = main,
    rounded corners,
    arc = 5pt   
}

\newtcolorbox{boxC}{
    colback = sub, 
    boxrule = 0pt  
}

\newtcolorbox{boxD}{
    colback = sub, 
    colframe = main, 
    boxrule = 0pt, 
    toprule = 3pt, 
    bottomrule = 3pt 
}

\newtcolorbox{boxE}{
    enhanced, 
    boxrule = 0pt, 
    borderline = {0.75pt}{0pt}{main}, 
    borderline = {0.75pt}{2pt}{sub} 
}

\newtcolorbox{boxF}{
    colback = sub,
    enhanced,
    boxrule = 1.5pt, 
    colframe = white, 
    borderline = {1.5pt}{0pt}{main, dashed} 
}

\newtcolorbox{boxG}{
    enhanced,
    boxrule = 0pt,
    colback = sub,
    borderline west = {1pt}{0pt}{main}, 
    borderline west = {0.75pt}{2pt}{main}, 
    borderline east = {1pt}{0pt}{main}, 
    borderline east = {0.75pt}{2pt}{main}
}

\newcommand{\E}{\mathbb{E}}

\begin{document}

\title{Parameter Symmetry Potentially Unifies Deep Learning Theory
}

\author{Liu Ziyin$^{1,3}$, Yizhou Xu$^2$, Tomaso Poggio$^1$, Isaac Chuang$^1$\\
$^1$\textit{Massachusetts Institute of Technology}\\
$^2$\textit{École Polytechnique Fédérale de Lausanne}\\
$^3$\textit{NTT Research}
}
\maketitle

\begin{abstract}
    The dynamics of learning in modern large AI systems is hierarchical, often characterized by abrupt, qualitative shifts akin to phase transitions observed in physical systems. While these phenomena hold promise for uncovering the mechanisms behind neural networks and language models, existing theories remain fragmented, addressing specific cases. In this position paper, we advocate for the crucial role of the research direction of parameter symmetries in unifying these fragmented theories. This position is founded on a centralizing hypothesis for this direction: parameter symmetry breaking and restoration are the unifying mechanisms underlying the hierarchical learning behavior of AI models. We synthesize prior observations and theories to argue that this direction of research could lead to a unified understanding of three distinct hierarchies in neural networks: learning dynamics, model complexity, and representation formation. By connecting these hierarchies, our position paper elevates symmetry -- a cornerstone of theoretical physics  -- to become a potential fundamental principle in modern AI. 
\end{abstract}

\section{Introduction}
More and more universal emergent phenomena of learning have been discovered in contemporary AI systems, and there is an urgent need for a unified theory to understand and explain them. These phenomena are shared by models with different architectures, trained on different datasets, and with different training techniques. The existence of these universal phenomena calls for one or a few universal explanations. However, until today, most of the phenomena are instead described by narrow theories tailored to explain each phenomenon separately -- often focusing on specific models trained on specific tasks or loss functions and in isolation from other interesting phenomena that are indispensable parts of the deep learning phenomenology. Certainly, it is desirable to have a \textit{universal perspective}, if not a \textit{universal theory}, that explains as many phenomena as possible. In the spirit of science, a universal perspective should be independent of system details such as variations in minor architecture definitions, choice of loss functions, training techniques, etc. A universal theory would give the field a simplified paradigm for thinking about and understanding AI systems and a potential design principle for a new generation of more efficient and capable models.

Learning phenomena in deep learning can be roughly categorized into three types, each capturing a different kind of hierarchy hidden in neural networks:
\begin{itemize}[noitemsep,topsep=0pt, parsep=0pt,partopsep=0pt, leftmargin=12pt]
    \item \textbf{Hierarchy of Learning Dynamics}: distinct \textit{temporal} regimes arise during learning -- this includes abrupt complexity jumps \cite{simon2023stepwise, jacot2021saddle, abbe2023sgd}, progressive sharpening and flattening \cite{cohen2021gradient}, and beyond-linear dynamics of training \cite{zhu2022quadratic};
    \item \textbf{Hierarchy of Model Complexity}: the \textit{functional} complexity of models adapts to the target function -- this is exhibited in simplicity biases \cite{kalimeris2019sgd}, compressive coding through the information bottleneck \cite{tishby2000information, tishby2015deep}, and the ``blessing of dimensionality" in overparameterized nets \cite{galanti2021role, Zhang_rethink};
    \item \textbf{Hierarchy of Neural Representation}: distinct \textit{spatial} structures arise in the layers of neural networks, with progressively deeper layers tending to encode increasingly abstract information -- this is evidenced in the structured representations such as neural collapse \cite{papyan2020prevalence}, hierarchical encoding of features \cite{zeiler2014visualizing}, and universal alignment of representations across models \cite{huh2024platonic}.
\end{itemize}

\begin{figure*}
    \centering
    \includegraphics[width=0.99\linewidth]{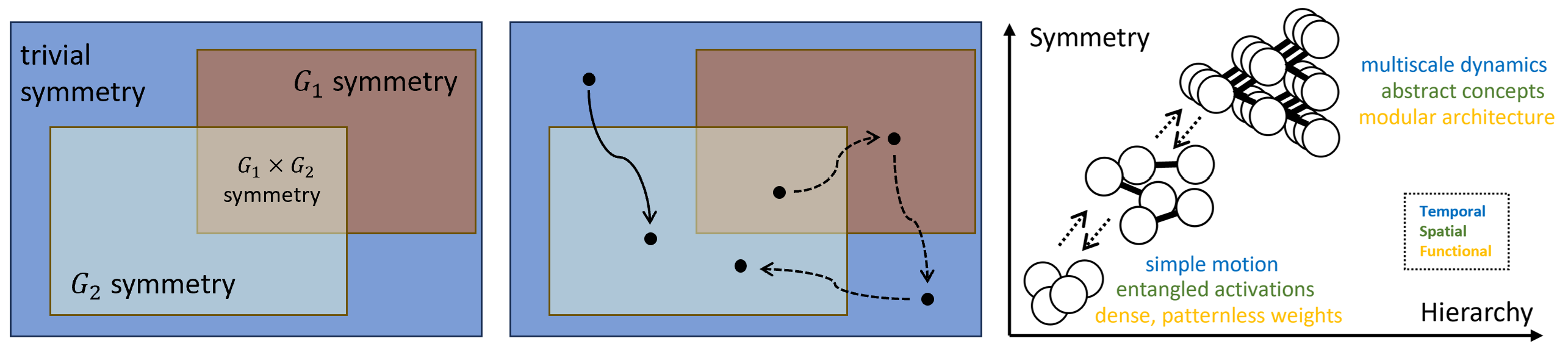}
    \caption{\small The division of solution space into hierarchies given by distinct parameter symmetries. \textbf{Left}: Example solution space of a model with parameter symmetries can be divided into hierarchies with boundaries prescribed by symmetry-breaking conditions. The more symmetry there is, the more restricted the hypothesis space becomes. \textbf{Middle}: The temporal (learning dynamics) and spatial (layer-wise information processing) dynamics of AI models can be characterized by the transitions between different symmetry groups. The solid line shows a symmetry restoration dynamics when the parameter transitions from a low-symmetry state to a high one (through time or through layers). The dashed lines show a compositional dynamics where the model follows two symmetry breaking and then a restoration. Actual learning dynamics of neural networks may involve the model first learning by breaking symmetries before regularization effects dominate \cite{li2021happens}, restoring symmetry. Similarly, for spatial processing, neural networks are found to break symmetries in early layers and restore symmetries in final layers (Section~\ref{sec: representation}). \textbf{Right}: The more symmetry a model has, the more spatial, temporal, and functional hierarchies it has. Changes in symmetries can induce transitions between these hierarchies.}
    \label{fig:main illustration}
\end{figure*}

In this paper, we argue for the position that the parameter symmetries should be a crucial perspective and direction of research in order to build universal theories of AI because, as we will collect key prior results to demonstrate, these three seemingly unrelated hierarchies of deep learning phenomena can be collectively understood as arising from having parameter symmetries in the model or loss function. Our central position is:
\begin{boxC}
\paragraph{Position:} Unifying theories and phenomenology of AI systems requires parameter symmetries, \textit{because} the hierarchies of learning dynamics, model complexity, and representation formation are primarily determined by parameter symmetry breaking and restoration.
\end{boxC}
Prior art has certainly hinted at this idea, but here we draw together many separate threads to complete a single picture, as illustrated in Figure~\ref{fig:main illustration}. The main position is supported by three specific hypotheses, each of which is now extensively supported by prior results:
\begin{itemize}[noitemsep,topsep=0pt, parsep=0pt,partopsep=0pt, leftmargin=14pt]
    \item \textit{Temporal} jumps follow transitions between symmetry groups (Section~\ref{sec: dynamics});
    \item \textit{Functional} constraints arise adaptively along boundaries of symmetry (Section~\ref{sec: complexity});
    \item \textit{Spatial} hierarchy of abstract representations arise due to layered symmetries (Section~\ref{sec: representation}).
\end{itemize}
If parameter symmetry is really  a unifying mechanism for deep learning, our understanding of the principles of AI can advance with explicit identification and engineering of symmetries in models, loss functions, and data. With the recent advances in how arbitrary parameter symmetries may be deliberately introduced or removed (Section~\ref{sec: mechanisms}), it is now possible to design symmetries matching practitioner intentions. Moreover, given widespread beliefs that hidden hierarchy phenomena of the kinds described above are beneficial, it follows one should be able to design models with intentional symmetries to introduce hierarchies into learning dynamics, the learned function complexities, and the layered structure of neural networks.

The next section introduces parameter symmetry. We then discuss existing supports for the three specific hypotheses given above. Section~\ref{sec: mechanisms} discusses mechanisms that cause and methods for controlling symmetry breaking and restoration. The last section discusses alternative views. Proofs, experimental details, andadditional figures are presented in the appendix.

\begin{table*}[t!]
    \centering
    \small
    \begin{tabular}{c|ccc}
    \hline
        Symmetry & Model condition  & Symmetric State  & Example \\
        \hline
        translation & $f(w) = f( w + \lambda z)$ for fixed $z$ & none &  softmax, low-rank inputs \\ 
       scaling & $f(w) = f(\lambda w)$ & none & batchnorm, etc. \\ 
       rescaling   & $f(u,w) = f(\lambda u, \lambda^{-1} w)$  & $\|u\|=\|w\|$ &  ReLU neuron \\
       rotation   & $f(W) = f(RW)$ for orthogonal $R$  & low-rank solutions & self-supervised learning \\
       permutation  &  $f(u,w) = f(w,u)$ & identical neurons & dense layers, ensembles\\
       double rotation  &  $f(U,W) = f(UA,A^{-1}W) $ & low-rank solutions  
 & self-attention, linear nets\\
       sign flip & $f(w) = f(-w) $ & $w=0$ & tanh neuron\\
    \hline
    \end{tabular}
    \caption{\small Common parameter symmetries in deep learning. We divide $\theta$ into three parts: $\theta=(w,u,v)$, where $w$ and $u$ are related to symmetry, while $v$ is symmetry-irrelevant and is omitted. Note that these symmetries are not mutually exclusive. For example, double rotation or rotation symmetry implies permutation symmetry and sign flip. Double rotation also implies rescaling. Some continuous groups do not have a discrete subgroup, such as the scaling and translation symmetry, which is also included for completeness. However, they still interact with regularizations in an interesting way: If there is a weight decay, the global minima are achieved at zero, which is ill-behaved for scaling symmetry but not for translation. Also, note that $Z_2$ subgroups are particularly common in these symmetries.}
    \label{tab:list of symmetries}
\end{table*}

\section{Parameter Symmetry in Deep Learning}
\label{sec:psymdef}

\begin{definition}
    Let $G$ be a linear representation of a group. We say that there is a $G$-parameter symmetry in the model $f(\theta,x)$ if $\forall g\in G$ and $\forall x$, $f(\theta, x) = f(g\theta, x)$.
\end{definition}
\begin{definition}\label{def: symmetry breaking}
    Let $P_G= \frac{1}{|G|}\sum_{g\in G}g $. The model parameter $\theta$ is said to be in a $G$-symmetric state if $P_G \theta = \theta$. Otherwise, $\theta$ is in the symmetry-broken state.
\end{definition}

These definitions are adopted from Ref.~\cite{ziyin2024remove}. $P_G$ is the projection matrix to the symmetry-invariant subspace of $G$ \cite{feit1982representation}. Note that if $P_G \theta = \theta$, then $g\theta = \theta$ for any $g$ because $gP_G=P_G$. A striking fact for those unfamiliar with parameter symmetries is that many such parameter symmetries exist in widely used loss functions and neural networks (see Table~\ref{tab:list of symmetries}). Understanding the implications of such symmetries has been an active area of research \cite{simsek2021geometry, entezari2021role, Dinh_SharpMinima, saxe2013exact, neyshabur2014search, tibshirani2021equivalences, ioffe2015batch,zhao2023improving,godfrey2022symmetries}, and this has been challenging since the number of groups induced by these symmetries often grows exponentially in the size of the model. Many prior works have explored the effect of parameter symmetry on constraining the model complexity or learning dynamics \cite{li2016symmetry, saxe2013exact, du2018algorithmic, hidenori2021noether, zhao2022symmetry, marcotte2023abide,li2020reconciling}. Because parameter symmetry naturally means that part of the solution space is redundant, having parameter symmetry can be seen as a form of ``overparameterization."

\paragraph{Example:} Consider the parameter symmetries in the self-attention logit of a transformer, which can be written as (using notation from Ref.  \cite{vaswani2017attention})
\begin{equation}
    a_{ij}(W_Q, W_K) = X_i^T W_Q W_K X_j. 
\end{equation}
It has a double rotation symmetry: for any invertible $M$, $a_{ij}(W_Q M, M^{-1}W_K) =a_{ij}(W_Q, W_K)$, which implies that the loss function is also invariant to this transformation. This, in turn, implies that the symmetric states are those with low-rank $W_Q$ and $W_K$. Note that $a_{ij}$ can also be written as 
\begin{equation}
    a_{ij}(W_Q, W_K) = X_i^T \sum_{l} W_Q^l (W_K^l)^T X_j,
\end{equation}
where $W_Q^l$ is the $l$-th column of $W_Q$ and $W_K^l$ is the $l$-th row of $W_K$. This means that it also has the permutation symmetry such that for any $l$ and $l'$, $a_{ij}(W_Q^l, W_K^l, W_Q^{l'}, W_K^{l'}) = a_{ij}(W_Q^{l'}, W_K^{l'}, W_Q^{l}, W_K^{l})$. An alternative way to see this is that permutation symmetry is a subgroup of the double rotation symmetry (because permutation matrices are invertible). For each $l$, there is also the sign flip symmetry, $a_{ij}(W_Q^l, W_K^l) = a_{ij}(-W_Q^l, -W_K^l)$, and the rescaling symmetry, $a_{ij}(\lambda W_Q^l, \lambda^{-1}W_K^l)$. Note that if $X$ is low-rank in the $z$-direction, these matrices also have a translation symmetry, where $a_{ij}(W_K^l + \lambda z)= a_{ij}(W_K^l)$ for any $l$ and $\lambda \in \mathbb{R}$. Therefore, we have seen that even for a single self-attention layer, there are many symmetries. 
Note that these symmetries are independent of the data distribution and apply to every pair of $X_i$ and $X_j$ -- this is the main reason these symmetries could significantly affect training neural networks. Recent works have shown that double rotation symmetry causes the self-attention layers to have a low-rank bias \cite{kobayashi2024weight}.

\section{Learning Dynamics is Symmetry-to-Symmetry}\label{sec: dynamics}

Symmetry has a direct influence on the loss landscape, and it thus affects the learning dynamics of neural networks through its effect on the landscape \cite{tatro2020optimizing, lim2024empirical}. One primary effect of symmetry on the loss landscape is that it creates extended saddle points from which SGD or GD cannot escape \cite{li2016symmetry, chen2023stochastic}. Meanwhile, it has been found that when a neural network is initialized with small norm weights, its learning dynamics is primarily saddle-to-saddle \cite{jacot2021saddle}. In fact, neural networks have been found to converge often to saddle points~\cite{alain2019negative, ziyin2023probabilistic}. Given that symmetries are the primary origins of saddle points, it is natural to hypothesize that the learning dynamics of neural networks are not only saddle-to-saddle but symmetry-to-symmetry: 
\begin{boxC}
\paragraph{Dynamics Hypothesis:} The learning dynamics of neural networks are dominated by jumps between symmetry groups, with parameters going from a larger to a smaller group (symmetry breaking) or from a smaller to a larger group (restoration).
\end{boxC}

As Theorem~\ref{theo: complexity} in the next section shows, at a $G$-symmetric solution, the number of effective model parameters is reduced by a number that matches the rank of the group. This means the change between symmetries naturally induces a change in model complexities. Therefore, these symmetry-to-symmetry jumps not only in terms of the loss function value but also in terms of complexity jumps.

Definition~\ref{def: symmetry breaking} can quantify the breaking and restoration of symmetry. 
We define the symmetry-breaking distance as:
\begin{equation}
    \Delta^G = \left \|\theta - P_G \theta \right\|_2^2 .
\end{equation}
When $\Delta^G > \Delta^G_{\rm th}$ for some threshold $\Delta^G_{\rm th}$ ($\Delta^G_{\rm th} = 0.05\sim 0.2$ in experiments), we say the $G$-symmetry is broken. We care about how many symmetries are broken for a given layer, so we can count the number of such large $\Delta^G$. For example, for permutation symmetry in a fully connected layer, the group-invariant projection is the average of the input and output weights of any pair of neurons $i$ and $j$. One can identify each pairwise distance as a different $\Delta^G$, which we will denote as $\Delta_{ij}$ throughout this work. We only need to count the number of neighboring neurons with a large $\Delta^G$ because they form a generating set of the symmetric group -- we refer to this number as the \textit{degree of symmetry} $N_{\rm dos}$. The difference between $N_{\rm dos}$ and the number of neurons is the \textit{degree of symmetry breaking} ($N_{\rm dosb}$). See Section~\ref{app sec: measurement} for details on measuring this quantity.

\begin{figure}
    \centering
    \includegraphics[width=0.9\linewidth]{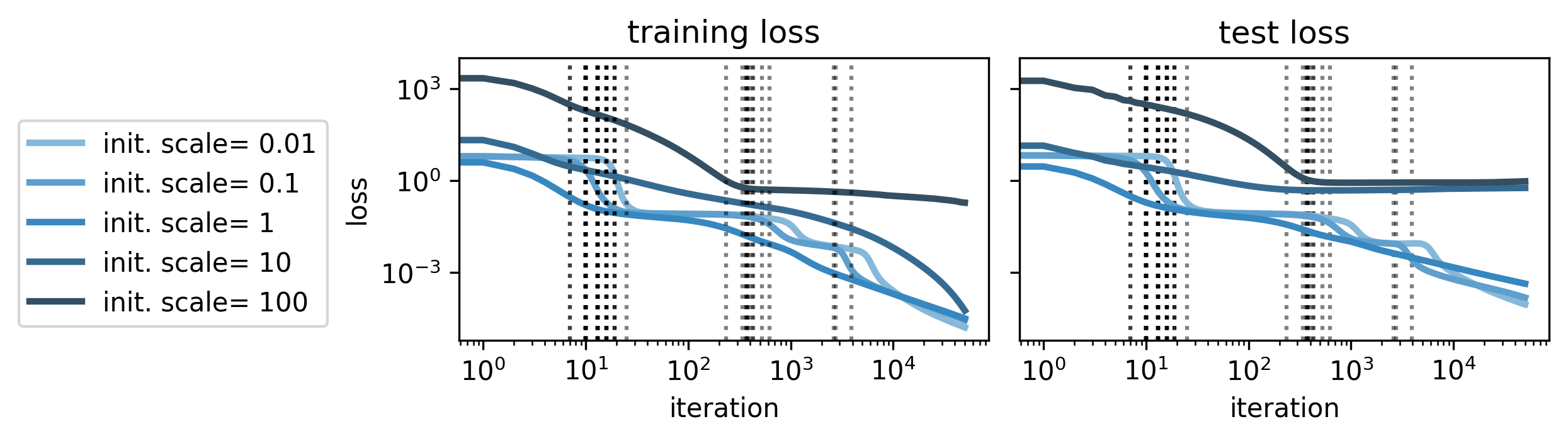}
    \caption{\small DNN learning dynamics is symmetry-to-symmetry. Recent works suggested the learning of neural networks is primarily saddle-to-saddle \cite{jacot2021saddle}, and escaping these saddle points coincides with a sudden change in the complexity of the network \cite{abbe2023sgd}. At the same time, symmetries have been found to be the primary causes of the saddle points \cite{li2016symmetry, ziyin2024symmetry}. Once the symmetry is removed, saddle points seem to have disappeared when interpolating different solutions of the model \cite{lim2024empirical}. The figure repeats an experiment that is similar to those in Ref. \cite{abbe2023sgd} and shows that the loss jumps when symmetry breaking happens (black dotted lines) and plateaus when there is no symmetry breaking for the smallest init. As the init. scale becomes larger; such plateaus disappear because they are far away from symmetric states.}
    \label{fig:symmetry breaking dynamics}
\end{figure}

Figure~\ref{fig:symmetry breaking dynamics} shows an example where we train an MLP in a teacher-student setting using SGD. The model exhibits multiple symmetry jumps when we initialize the model at a small norm, i.e. the model's initial state is approximately in the symmetric state. The black dashed lines show when a pairwise distance exceeds $\Delta^{G}_{\rm th}$, signaling a symmetry breaking. We see that the symmetry-breaking times coincide well with the periods where the network learns rapidly. This shows that the saddle-to-saddle learning dynamics is potentially caused by the symmetry-to-symmetry transitions during training. Examples with polynomial and sinusoidal nets are shown in Section~\ref{app sec: exp detail}.


\paragraph{Decomposability and Feature Learning} Another interesting point worth raising about symmetries is that if the model is close to a symmetric state $\theta^*$, then it can be decomposed into a sum of a quadratic model and a strictly smaller model. Ref. \cite{ziyin2024symmetry} showed that for a small $\Delta^G$,
\begin{equation}
    f(\theta,x) = f(\theta^*) + \theta_G^T \nabla^2 
 f(\theta^*,x)  \theta_G + o(\Delta^G),
\end{equation}
where $\theta^* = P_G \theta$ is essentially a parameter projected to a $d-{\rm rank}(P_G)$ subspace, and $\theta_G = (I-P_G)\theta$ are the parameters projected to the symmetry breaking direction. Therefore, the second term can be seen as a quadratic model trained on a data transformation kernel of the form $ \nabla^2 f(x)$. This means that close to symmetry states, the learning dynamics must be ``diligent" and cannot be characterized by NTK \cite{jacot2018neural} or lazy training \cite{chizat2018lazy}. Thus, parameter symmetry also plays a significant role in feature learning \cite{yang2020feature} and a lot of work on neural network training with small initialization reflect the roles of symmetry in feature learning \cite{saxe2013exact, kunin2024get, gissin2019implicit, atanasov2021neural, simon2023stepwise, xu2024does}. In contrast, if a model is antisymmetric, its expansion will only have odd-order terms, and one expects that this model primarily behaves like a linear model.

\begin{figure*}[t!]
    \centering
    \hfill
    \includegraphics[width=0.26\linewidth]{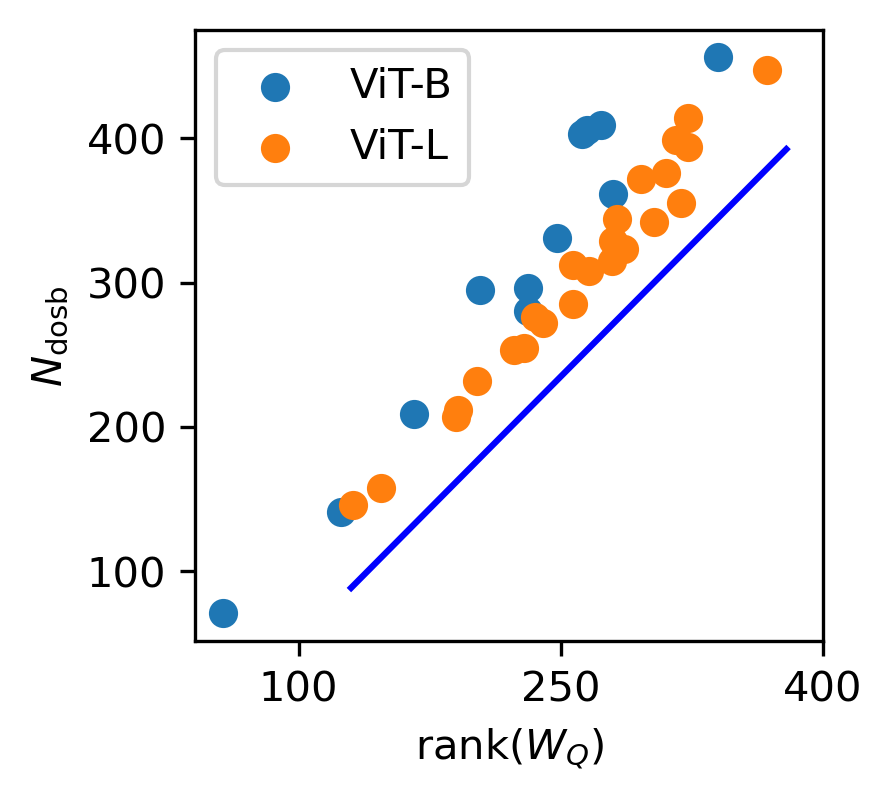}
    \
    \includegraphics[width=0.35\linewidth]{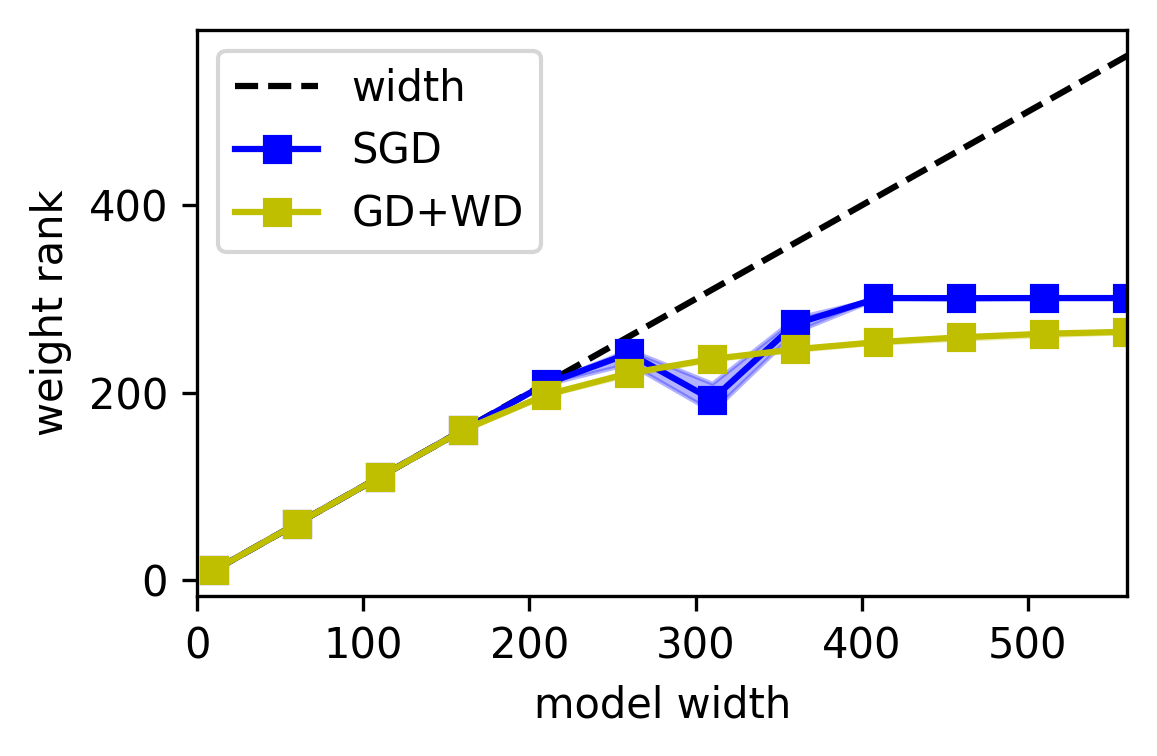}
    \hfill
    \includegraphics[width=0.23\linewidth]{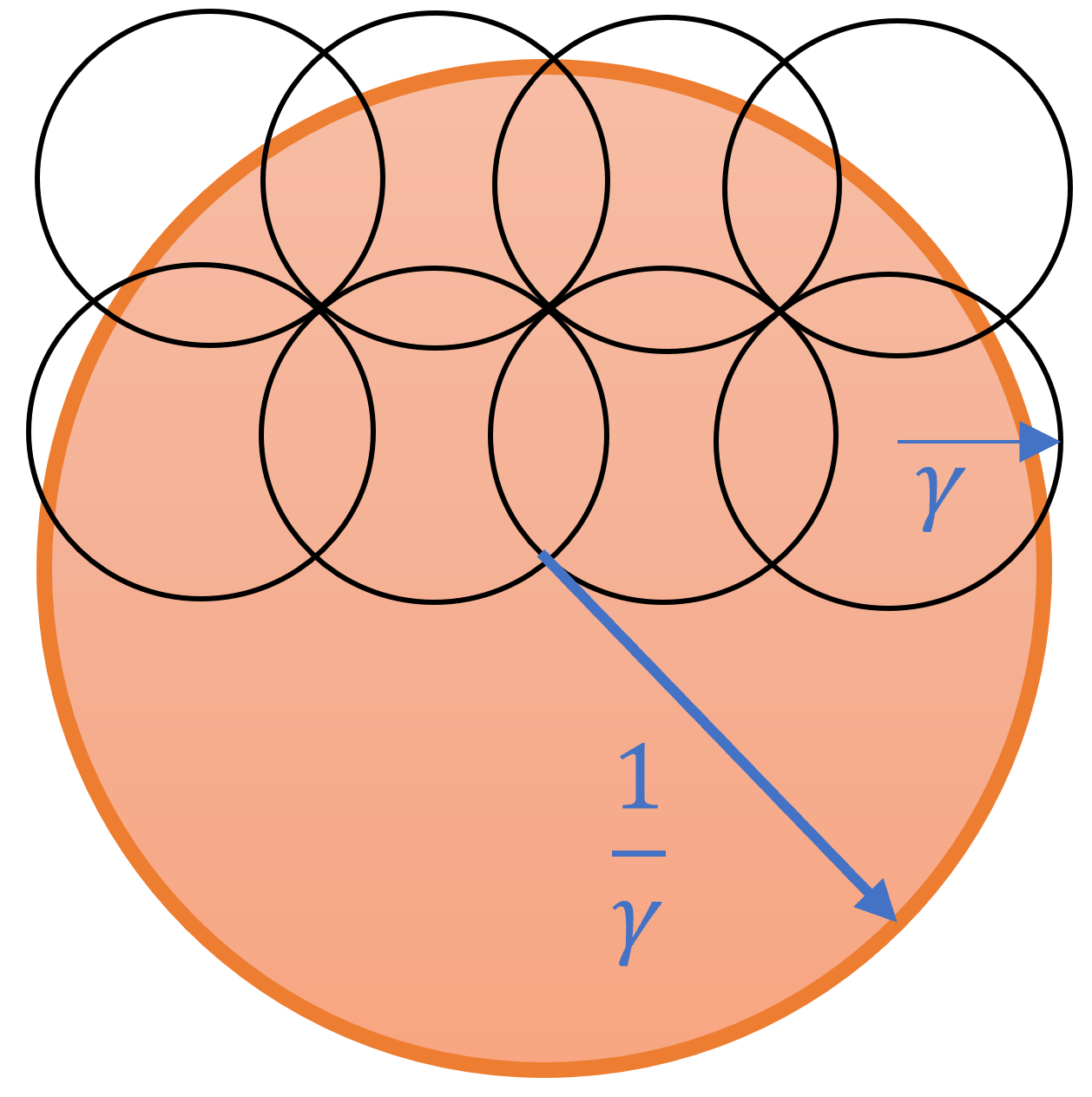}
    \hfill
    \caption{\small The complexity and generalization error of neural networks do not grow with width. A well-known observation in deep learning is that overparameterized networks not only work well, but also their generalization errors are empirically found to be essentially independent of width \cite{li2018neural, pinto2024generalization, galanti2023norm, mingard2025deep}, an observation at odds with conventional bounds based on the Rademacher complexity \cite{Zhang_rethink}. The existence of parameter symmetries may solve this problem because, with a fixed regularization, the maximum surviving neurons are upper bounded by a constant. The \textbf{left} figure takes publically available Imagenet-pretrained ViT-Base (80M) and ViT-Large (300M) have similar degrees of symmetry in their self-attention layers. Here, each dot is a self-attention layer. The \textbf{middle} figure shows that the weight rank does not grow with increasing model size. A mechanism for this is that permutation symmetry implies that neuron weights of distance $o(\gamma)$ to each other must collapse \cite{xu2025unpub1} -- this means that within a fixed $n$-sphere, there can be at most $1/\gamma^n$ different neurons. This filling procedure is illustrated in the \textbf{right} figure: The orange circle denotes the parameter space, and the little circles are the neuron weights.}
    \label{fig:complexity upper bound}
\end{figure*}
\section{Symmetry Adaptively Limits Model Complexity}\label{sec: complexity}

Another important implication of symmetries is that they control the effective number of parameters. When a system is symmetry-broken, new degrees of freedom usually emerge (known as a Goldstone mode in physics \cite{peskin2018introduction}), and the system becomes higher-dimensional and more complex. When it is symmetry-restored, the effective dimension is reduced. This means that the symmetry boundaries naturally correspond to boundaries of different hierarchies of model capacities. This idea has a place in machine learning. For example, equivariant networks can improve the sample efficiency of training, and existing equivariant networks almost always involve introducing new parameter symmetry to the neural network \cite{maron2018invariant, bronstein2021geometric}. In Bayesian learning, parameter symmetry (and its generalizations) have been found to directly determine the generalization scaling of the model \cite{watanabe2010asymptotic}. Prior works also indicate how neural networks may break symmetries to adapt to different tasks \cite{fok2017spontaneous}.

More broadly, combining the idea that symmetry classes have different model complexity and the common observation that SGD tends to learn a function whose complexity is proportional to the complexity of the target function \cite{kalimeris2019sgd, mingard2025deep}, it is natural to arrive at the following hypothesis:
\begin{boxC}
\paragraph{Complexity Hypothesis:}  Symmetry adaptively controls the model's capacity. The model converges to a symmetry class whose complexity matches the complexity of the target.
\end{boxC}
The direct correspondence between symmetry and model capacity has been justified by a recent result: at a $G$-symmetric state, the effective model dimension decreases by exactly ${\rm rank}(P_G)$ throughout training. Moreover, in the NTK limit, this reduction in parameter dimension implies a selection of input features, and being at a symmetric solution directly affects the input-output function map.
\begin{theorem}\label{theo: complexity}
    (Informal, \cite{ziyin2024remove}) If the loss function has $G$-symmetry, and the initial $\theta\in \mathbb{R}^d$ is $G$-symmetric, then (1) there exists a model with $d-{\rm rank}(P_G)$ parameters whose learning dynamics is the same as $\theta$, and (2), in the lazy training regime, this is equivalent to applying a rank $d-{\rm rank}(P_G)$ mask to the NTK features.
\end{theorem}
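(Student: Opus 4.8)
The plan is to reduce both claims to a single structural fact about gradient dynamics on the fixed-point subspace of $G$. Throughout, I take $G$ finite (or compact) and, after averaging an inner product, acting by orthogonal matrices, so that $P_G=\tfrac{1}{|G|}\sum_{g}g$ is the orthogonal projector onto the invariant subspace $V_G:=\mathrm{range}(P_G)$ (the $G$-invariant subspace, of dimension $d-\mathrm{rank}(P_G)$ in the statement's bookkeeping). The only fact needed is that a $G$-invariant loss has a $G$-equivariant gradient, so that $P_G\nabla L(\theta)=\nabla L(\theta)$ whenever $\theta\in V_G$: differentiating $L(g\theta)=L(\theta)$ gives $g^\top\nabla L(g\theta)=\nabla L(\theta)$, and at $\theta\in V_G$, where $g\theta=\theta$ and $g^\top=g^{-1}$, this becomes $\nabla L(\theta)=g\,\nabla L(\theta)$ for every $g$, hence $\nabla L(\theta)\in V_G$. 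Moreover $L$ is automatically $G$-invariant once $f$ is, since $L(\theta)=\mathbb{E}_{(x,y)}[\ell(f(\theta,x),y)]$ and likewise for any minibatch loss, so everything below covers GD and SGD, not just the population flow.

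For part (1): if $\theta_0\in V_G$, the gradient flow $\dot\theta=-\nabla L(\theta)$ has its velocity in $V_G$ at every point of $V_G$, so (with $\nabla L$ locally Lipschitz) uniqueness of solutions forces $\theta_t\in V_G$ for all $t$; for GD/SGD the same follows by induction since each update adds a vector of $V_G$ (and for momentum or adaptive methods provided the optimizer state is also initialized in $V_G$, e.g. zero buffers). Fix $E\in\mathbb{R}^{d\times m}$ whose columns are an orthonormal basis of $V_G$, with $m=\dim V_G=d-\mathrm{rank}(P_G)$, and put $c_t:=E^\top\theta_t$, so $\theta_t=Ec_t$; the chain rule gives $\dot c_t=-E^\top\nabla L(Ec_t)=-\nabla_c\big(L(Ec_t)\big)$. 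Hence the $m$-parameter model $\tilde f(c,x):=f(Ec,x)$, trained on the same data and loss, reproduces both the trajectory $\theta_t=Ec_t$ and the outputs $f(\theta_t,x)=\tilde f(c_t,x)$ exactly: a model with $d-\mathrm{rank}(P_G)$ parameters realizing the same learning dynamics.

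For part (2): in the lazy/NTK regime one has $f(\theta_t,x)\approx f(\theta_0,x)+\phi(x)^\top(\theta_t-\theta_0)$ with frozen features $\phi(x):=\nabla_\theta f(\theta_0,x)$, and the function evolves by kernel gradient descent with NTK $\Theta(x,x')=\phi(x)^\top\phi(x')$. Differentiating $f(g\theta,x)=f(\theta,x)$ in $\theta$ and evaluating at the symmetric point $\theta_0\in V_G$ shows $\phi(x)=P_G\phi(x)\in V_G$ for every $x$; together with $\theta_t-\theta_0\in V_G$ from part (1), this gives $\phi(x)^\top(\theta_t-\theta_0)=(P_G\phi(x))^\top(\theta_t-\theta_0)$, so the dynamics is unchanged if $\phi$ is replaced by the masked feature map $x\mapsto P_G\phi(x)$, i.e. if $\Theta$ is replaced by $\Theta_G(x,x')=\phi(x)^\top P_G\phi(x')$. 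Since $P_G$ is a projector of rank $m=d-\mathrm{rank}(P_G)$, this is precisely a rank-$(d-\mathrm{rank}(P_G))$ mask applied to the NTK features; the attainable functions lie in $\{x\mapsto\phi(x)^\top v : v\in V_G\}$, so the effective feature map explores only a $(d-\mathrm{rank}(P_G))$-dimensional set of directions, whereas at a generic non-symmetric $\theta_0$ the NTK features would span all $d$ --- this is how sitting at a symmetric solution selects input features and constrains the learned input--output map.

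The symmetry bookkeeping above is soft, and I expect the main obstacle to be making the lazy-training statement precise: ``the lazy training regime'' must be quantified (width, step size, output scale), and one must check that requiring a $G$-symmetric initialization is compatible with the usual large-width NTK initialization --- that the linearization error stays controlled with $\theta_0\in V_G$ --- rather than the two requirements being in tension. Two smaller caveats should also be recorded: $P_G$ is a bona fide orthogonal projector only for finite/compact $G$, so the non-compact rows of Table~\ref{tab:list of symmetries} --- pure scaling and translation --- fall outside this argument and need $\tfrac{1}{|G|}\sum_g$ replaced by a limiting or normalized construction, or a separate treatment; and upgrading ``same loss and parameter trajectory'' to arbitrary adaptive optimizers requires every component of the optimizer state to start in $V_G$, which holds for the standard zero-initialized buffers but should be stated.
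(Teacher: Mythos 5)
Your proposal is correct and follows essentially the same route as the paper's formal proof: both establish that the gradient at a $G$-symmetric point lies in the invariant subspace $\mathrm{im}(P_G)$ (you via equivariance of the gradient, the paper via $(I-P_G)\nabla_\theta\ell(x,\theta_0)=0$), conclude that GD/SGD preserves that subspace so the dynamics can be reparametrized over a basis of $\mathrm{im}(P_G)$, and then linearize at the symmetric point to show the NTK features are masked by $P_G$. The only substantive difference is that the paper additionally computes the explicit limit $\theta^*=A^{+}b$ of the kernelized GD iteration, which your argument omits but which is not needed for the claim as stated.
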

Thus, symmetric solutions are low-capacity states from which gradient-based training methods cannot escape. More importantly, these symmetric solutions are preferred solutions when weight decay is used \cite{ziyin2024symmetry} or if the minibatch noise is strong due to a mechanism called ``stochastic collapse" \cite{chen2023stochastic}. 

\paragraph{Parameter Symmetry as an Occam's Razor} In other words, parameter symmetries in the model may function like an Occam's razor, where simpler and low-complexity solutions are favored -- an observation that has been made across almost all modern neural models, whose generalization performances are independent of the size of the model \cite{Zhang_rethink, galanti2023norm}, which is in discrepancy with common generalization bounds that predict a $\sqrt{\psi}$ deterioration with respect to the width $\psi$ \cite{neyshabur2018towards}. One can test this hypothesis simply by training an MLP and a small transformer on different randomly generated teacher networks -- see Section~\ref{app sec: adaptive capacity}. For different tasks, the same network converges to solutions with different symmetry classes corresponding to different levels of complexity. Similarly, for a fixed task, networks with different levels of capacity converge to similar degrees of symmetry. See Figure~\ref{fig:complexity upper bound}-Left for vision transformers (ViT) \cite{dosovitskiy2020image} trained on Imagenet.

Here, one can raise an insightful conjecture about the quantification of complexity control due to permutation symmetry, where $\gamma$ is the weight decay,  $\eta/S$ is the learning-rate-batch-size ratio. Recall that the distance between two neurons is $\Delta_{ij}$.
\begin{conjecture}
(Space Quantization Conjecture \cite{xu2025unpub1}, Informal) In every layer with permutation symmetry, for two nonidentical neurons $i$ and $j$, $\Delta_{ij} > O(\kappa^{\beta})$ after training, where $\beta >0$ and $\kappa$ is the regularization strength.
\label{conjecture}
\end{conjecture}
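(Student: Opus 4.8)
The plan is to reduce the conjecture to a local analysis of the regularized dynamics near a permutation-symmetric state and then turn the resulting ``collapse basin'' into the claimed separation bound. Fix one transposition $(i\,j)$ in the layer's permutation group $G$, let $P=P_{(ij)}$ be its averaging projector, and write $\theta=\theta^*+\delta$ with $\theta^*=P\theta$ (neurons $i$ and $j$ merged) and $\delta=(I-P)\theta$ supported on the differences of their incoming and outgoing weights, so that $\Delta_{ij}=\|\delta\|_2^2$. To the usual approximation, the end-of-training state is a stationary point of the regularized objective $L(\theta)=\ell(\theta)+\tfrac{\kappa}{2}\|\theta\|_2^2$ (for pure SGD, a fixed point of the stochastic-collapse dynamics of \citet{chen2023stochastic}, with $\kappa$ an effective regularization scale assembled from $\gamma$ and $\eta/S$); its component along the symmetry-breaking subspace is then $(I-P)\nabla\ell(\theta)+\kappa\delta=0$.

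The structural input is that $(i\,j)$ fixes $\theta^*$ and negates $\delta$, so the map $\delta\mapsto\ell(\theta^*+\delta)$ is even and its gradient is odd; specializing the expansion of \citet{ziyin2024symmetry} quoted above gives $(I-P)\nabla\ell(\theta)=H\delta+R(\delta)$, where $H=(I-P)\nabla^2\ell(\theta^*)(I-P)$ is the broken-direction Hessian and $\|R(\delta)\|\le C\|\delta\|^3$ on a neighborhood, with $C$ controlled by a bound on the fourth derivatives of $\ell$ over the parameter ball that weight decay confines the trajectory to. Stationarity becomes $(H+\kappa I)\delta=-R(\delta)$, and pairing with $\delta$ gives $\lambda_{\min}(H+\kappa I)\,\|\delta\|^2\le C\|\delta\|^4$, hence the dichotomy: either $\delta=0$ (the neurons are genuinely identical) or $\Delta_{ij}=\|\delta\|^2\ge\lambda_{\min}(H+\kappa I)/C$. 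Whenever $\lambda_{\min}(H)\ge-\kappa+\Omega(\kappa)$ --- in particular whenever $H\succeq 0$, i.e.\ the symmetric state is stable in the broken direction --- this is a bound of order $\kappa$; with $C=O(1)$ for bounded-derivative networks (e.g.\ $\tanh$, sigmoid) it gives $\beta=1$, while for ReLU or polynomial networks $C$ grows polynomially with the weight-decay-controlled norm $\sim\kappa^{-1/2}$, degrading the exponent but keeping $\beta>0$. For pure SGD one argues analogously after linearizing to the Ornstein--Uhlenbeck process $d\delta=-(H+\kappa I)\delta\,dt+\sqrt{\eta/S}\,dW$ (schematically): when the drift is contractive, stochastic collapse \citep{chen2023stochastic} confines $\delta$ to a ball of radius polynomial in $\eta/S$ and $\kappa$, so any pair that does \emph{not} collapse must sit outside it.

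The main obstacle is uniformity of the Hessian gap: the estimate $\Delta_{ij}\ge\lambda_{\min}(H+\kappa I)/C$ is vacuous precisely when $H$ has an eigenvalue within $o(\kappa)$ of $-\kappa$, i.e.\ when the trained network is fine-tuned to the pitchfork bifurcation separating ``collapse'' from ``split.'' Making the argument rigorous therefore requires either a genericity/transversality hypothesis keeping the spectrum of $H(\theta^*)$ an $\Omega(\kappa)$-distance from $-\kappa$ along the relevant part of the trajectory, or a bifurcation-normal-form analysis that recovers a (weaker) polynomial bound from the quartic and higher terms, with arbitrarily high-order degeneracies excluded by an analyticity assumption on $\ell$ or a further genericity argument. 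A secondary obstacle is justifying ``end of training $=$ stationary point'' for SGD: this needs the concentration estimates underlying stochastic collapse, together with a union bound over the $O(\text{width}^2)$ pairs to lift the per-pair statement to the whole layer. Concretely, I would first establish the clean version for gradient flow with weight decay under a generic-spectrum assumption (obtaining $\beta=1$ for bounded-derivative networks), and then separately upgrade to SGD via \citet{chen2023stochastic} and remove the genericity assumption through the bifurcation normal form.
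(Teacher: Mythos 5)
Your route is genuinely different from the paper's, and the obstacle you flag at the end is not a technicality --- it is exactly the case the statement is about, and the paper's proof is built to avoid it. You analyze \emph{first-order stationarity} of the regularized loss near the merged point, expand the gradient in the broken direction as $H\delta+R(\delta)$, and conclude $\|\delta\|^2\ge\lambda_{\min}(H+\kappa I)/C$. This is only informative when $H+\kappa I\succ0$, i.e.\ when the merged state is stable along the splitting direction --- but a pair of neurons ends up \emph{nonidentical} precisely because the symmetric state is unstable there (the post-pitchfork regime, $\lambda_{\min}(H)<-\kappa$), where your inequality is vacuous. So the ``clean version under a generic-spectrum assumption'' covers the regime where neurons typically collapse anyway, and the entire burden of the conjecture is pushed into the bifurcation-normal-form step you defer; in that regime the separation of the split stationary points scales with the data-dependent quantity $|\lambda_{\min}(H)+\kappa|$, not with $\kappa$, so it is not clear the deferred step even yields a bound of the form $\kappa^\beta$.

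The paper's proof (Appendix~\ref{app sec: space quantization conjecture}) sidesteps the Hessian entirely by using \emph{global} optimality and a zeroth-order comparison: if $(\theta_1,\theta_2)$ is a global minimum, merging the two neurons to $\frac{1}{2}(\theta_1+\theta_2)$ cannot decrease the loss. Weight decay strictly rewards merging by $\frac{1}{2}\gamma\|\theta_1-\theta_2\|^2$, while the change in $\ell_0$ is controlled by the same symmetry fact you use ($f'(0)=0$ for $f(z)=\ell_0(\text{mean}\pm z(\theta_1-\theta_2))$, because the transposition negates the splitting direction) combined with a $q$-H\"older condition on $\nabla\ell_0$, giving $|\Delta\ell_0|\le K\|\theta_1-\theta_2\|^{q+1}$. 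Global minimality then forces $\|\theta_1-\theta_2\|\ge(\gamma/2K)^{1/(q-1)}$ unconditionally --- no spectral gap, no genericity, no distinction between the collapse and split regimes --- and a packing argument inside the weight-decay ball converts this into a width-independent bound on the number of distinct neurons. What your approach buys in exchange is applicability to arbitrary stationary points (not just global minima) and a sharper exponent $\beta=1$ when the spectral condition does hold; but as written, the unconditional statement is not established, and the missing idea is the replacement of stationarity-plus-curvature by the direct merge-versus-split energy comparison.
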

For completeness, a special case is proved in Section~\ref{app sec: space quantization conjecture}. A primary mechanism for this absolute upper limit is a combination of regularization and symmetry (Section~\ref{sec: mechanisms}): (1) using weight decay $\gamma$, the model parameters must lie within a ball of radius $O(1/\gamma)$ to the origin; (2) with symmetry any neuron whose weights are with a distance of $\gamma$ to each other must become identical. This means that the number of surviving neurons is upper bounded by the number of $n-$hyperspheres of radius $O(\gamma)$ that one can fit in a hypersphere of radius $O(1/\gamma)$; this number can be upper bounded by $O(\gamma^{-n})$. This exponential index can be improved to be independent of $n$ and so does not suffer the curse of dimensionality (Section~\ref{app sec: space quantization conjecture}). 
From a physics picture, the neurons start to form a ``lattice" in the parameter space with the lattice distance $\gamma$.

Thus, when regularized, there are, at most, a finite number of nonidentical neurons within a layer, however wide it is. This implies that the actual complexity of the trained network must decrease as a function polynomial in the regularization strength. 
See Figure~\ref{fig:complexity upper bound}-Middle, where we train a two-hidden-layer network with varying widths. When weight decay or stochastic training is used, we see that the dimension of the latent representation saturates at a large model size. This could imply that the actual model complexity is much smaller than its apparent dimension and may be a key mechanism for understanding why highly overparametrized networks can still generalize. 
For example, Ref. ~\cite{bartlett2019nearly} showed that the VC dimension of a two-layer piecewise linear network is $O(\psi)$. Our result implies that the nonidentical number of neurons must be $O(1)$ after training. This, in turn, means that the VC dimension of the model is independent of width.



\section{Representation Learning Requires Parameter Symmetry}\label{sec: representation}

\begin{wrapfigure}{r!}{0.4\linewidth}
    \centering
    \vspace{-2em}
    \includegraphics[width=\linewidth]{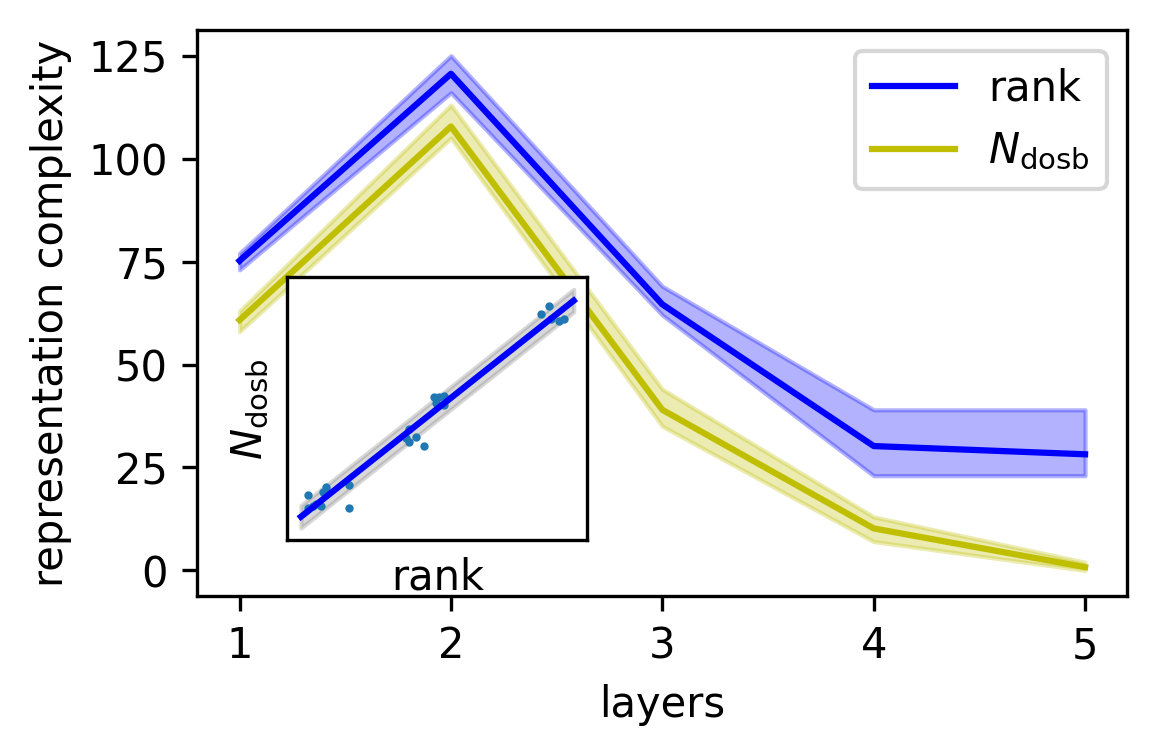}
    \caption{\small Neural networks learn a hierarchical representation. Recent works on representation learning have suggested that the rank of the latent representation first increases and then decreases through the layers \cite{xu2023janus, masarczyk2024tunnel}. This is reasonable because, on the one hand, a network needs to be wide enough to learn disconnected decision regions \cite{nguyen2018neural}, while permutation symmetries drive towards low-rankness \cite{ziyin2024symmetry}. This figure repeats the experiment in Ref. \cite{masarczyk2024tunnel} and shows the rank and degree of symmetry breaking, which can be seen as the simplest metrics of the representation complexity, of different layers in a 5-layer FCN trained on CIFAR-10. This experiment shows hierarchical representations may be due to symmetry changes: the beginning layers feature symmetry breaking, and later layers are primarily symmetry restoration.}
    \label{fig:hierachy}
    \vspace{-7em}
\end{wrapfigure}
Representation learning is believed to be the most essential aspect of deep learning \cite{bengio2013representation}. 
Learned representations of neural networks are found to take almost universally hierarchical forms, where earlier layers encode a large variety of low-level features and later layers learn a composed and abstract representation that is invariant to the changes in the low-level details \cite{zeiler2014visualizing}. One reason why symmetry may serve as a driving mechanism for learning these structured representations is that these structures almost always involve compressing information onto a few neurons and come with a low-rank structure in the hidden layer \cite{alain2016understanding, masarczyk2024tunnel, xu2023janus, papyan2020prevalence}, and a primary low-rank mechanism for deep neural networks is through the permutation symmetry of the layer or the rescaling symmetry of ReLU neurons \cite{fukumizu1996regularity, fukumizu2000local, ziyin2024symmetry}. As a primary example, neural collapses (NC) often happen in an image classification task, where the inner-class variations are found to disappear in the last and intermediate layers of the neural network \cite{papyan2020prevalence}, resulting in a representation whose rank matches the number of classes. This implies that the network has learned a hierarchical representation, where, in the later layers, only high-level features are encoded. These results motivate the following hypothesis:
\begin{boxC}
\paragraph{Representation Hypothesis:} Learning invariant, hierarchical and universal latent representations requires parameter symmetry. 
\end{boxC}
\begin{figure}[t!]
    \centering
    \includegraphics[width=0.28\linewidth]{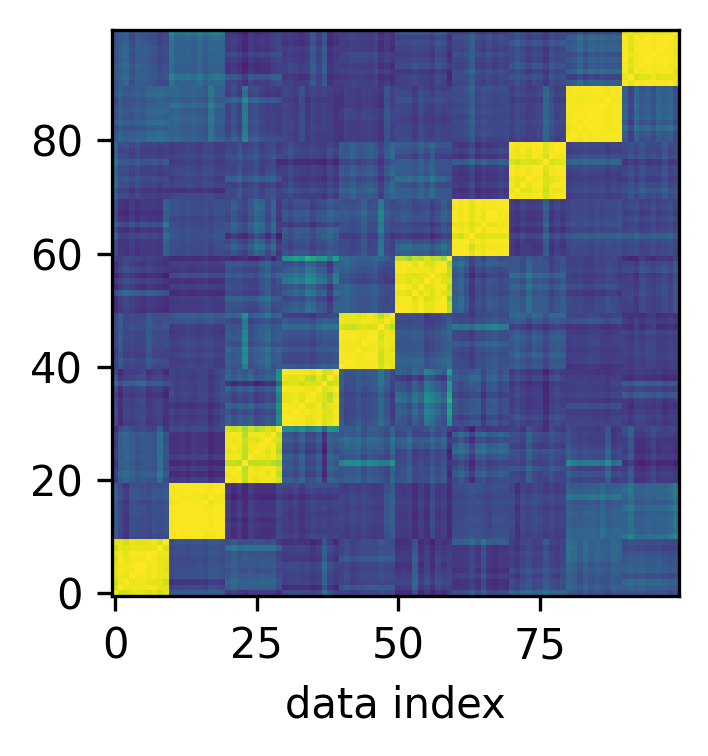}
    \includegraphics[width=0.28\linewidth]{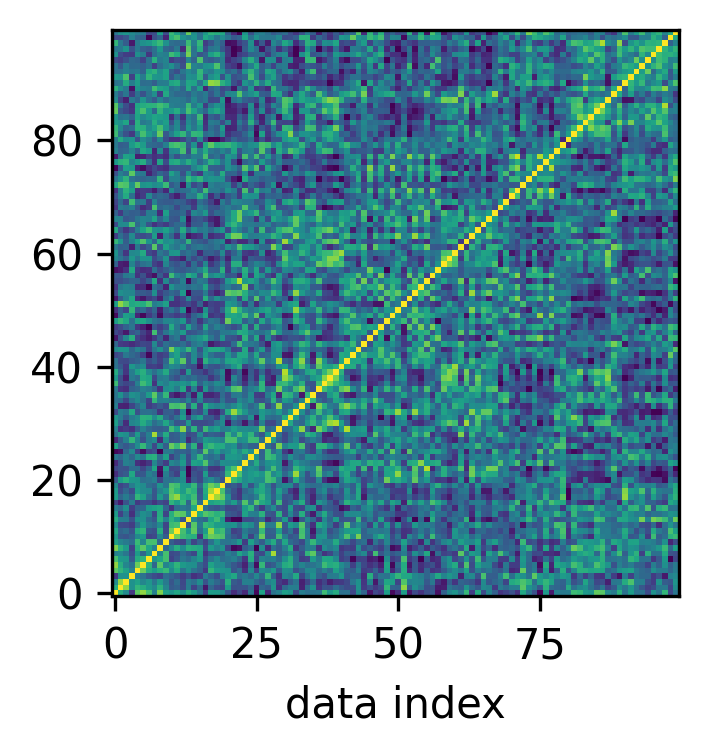}
    \caption{\small Neural collapse (NC) only happens when permutation symmetry is present. NC is a primary example of how invariant high-level representations emerge in neural networks \cite{papyan2020prevalence} and exist quite generally in classification, regression, and large language models \cite{andriopoulos2024prevalence, wu2024linguistic}. When NC happens, the learned representation must be low-rank; however, Ref.~\cite{ziyin2024remove} showed that if the permutation symmetries are removed, the learned representation is always full-rank. This implies that permutation symmetry is a necessary condition for NC to happen. The figures show the representation alignment of $100$ CIFAR10 images across $10$ classes ($10$ images in each class) and illustrates the result of Ref. ~\cite{ziyin2024remove}. The color represents the correlation between representations. \textbf{Left}: The vanilla model exhibits neural collapse, where all neurons are similar for the same class. \textbf{Right}: The innerclass variation becomes significant when the permutation symmetry is removed.} 
    \label{fig:neural collapse}
\end{figure}
\paragraph{Invariant and Hierarchical Representation Learning} See Figure~\ref{fig:neural collapse}. Here, we train a standard ResNet18 on CIFAR-10, which is known to exhibit NC. In comparison, we also train a ResNet18 whose symmetries have all been removed using the method proposed in Ref. \cite{ziyin2024remove}. We see that after removing permutation symmetries, the innerclass variation of representations no longer vanishes. In Section~\ref{app sec: neural collapse}, we show that the spectral gap between class means and innerclass variations also becomes smaller.

More broadly, it is important to understand how representations build up as the input data passes through layers of a neural network. It seems likely that there are at least three distinctive regimes in the layers of a trained net: the first few layers of neural networks serve as an \textit{expansion} phase where the representation becomes linearly separable \cite{alain2016understanding}, which requires the layer to be wide and implies a high rank \cite{nguyen2018neural}; then, a ``reduction" phase happens where the irrelevant information is thrown away and the neurons encode more and more compact information \cite{xu2023janus, rangamani2023feature}; lastly, a ``transmission" phase where the layers do nothing except transmitting the signal it receives \cite{masarczyk2024tunnel}. The NC can be seen as an example of the transmission and the simplest case of such a hierarchical representation \cite{ziyin2024formation}. These results all suggest that the representation rank is a good metric of the complexity of the layer. 
As Figure~\ref{fig:hierachy} shows, these changes in the representation ranks match well the symmetry-breaking level of the layer, a strong evidence that symmetry may drive the formation of hierarchical representations.

\paragraph{Universal Representation Learning} 
The neural collapse is also a special case of what we will call ``universal representation learning." Recent works found that the representations of learned models are found to be universally aligned to different models trained on similar datasets \cite{bansal2021revisiting, kornblith2019similarity}, and to the biological brains \cite{yamins2014performance}. This intriguing phenomenon has a philosophical undertone and has been termed ``Platonic Representation Hypothesis" \cite{huh2024platonic}. Here, we say that the two neural networks have learned a universal representation if for all $x_1,\ x_2$, 
\begin{equation}\label{eq: universal alignment}
    h_A(x_1)^T h_A(x_2) = h_B(x_1)^T h_B(x_2),
\end{equation}
where $h_A$ is the normalized activation of network $A$ in one of the hidden layers, and  $h_B$ for network B. This is an idealization of what people have observed -- and the difference between the two sides is the ``degree of alignment". The NC is a special case because when NC happens, the representations form a simplex tight frame, and so two networks with NC must have an aligned representation.

Recent results have suggested that symmetry may play a key role in shaping the representation of neural networks. The following theorem from Ref. \cite{ziyin2025neuralthermodynamicsientropic} shows that the double rotation symmetry in deep linear networks leads to the emergence of universal representations in different models, even when data and models are different.

\begin{theorem}\label{theo: alignment}
    (Informal, \cite[Theorem 8]{ziyin2025neuralthermodynamicsientropic}) Let $\mathcal{D}_M = \{(Mx_i, y_i)\}_i$ be the training set, where $M$ is an invertible matrix. Let deep linear networks A and B have arbitrary depths and widths. Let model A train with SGD on $\mathcal{D}_M$ and model B with SGD on $\mathcal{D}_{M'}$ for arbitrary $M$, $M'$. Then, if the model converges and is at the global minimum, every layer $h_A$ of A is \textbf{perfectly} aligned with every layer $h_B$ of B for any $x$:
    \begin{equation}
        h_A(x)  \propto R h_B(x),
    \end{equation}
    where $R$ is an orthogonal matrix.
\end{theorem}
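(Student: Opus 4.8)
The plan is to reduce the statement to the canonical form of \emph{balanced} deep linear networks at a global minimum and then exploit the invertibility of $M$ and $M'$ to identify the two networks' effective input--output maps. The symmetry being appealed to is the double rotation symmetry $(W_{\ell+1},W_\ell)\mapsto(W_{\ell+1}A,A^{-1}W_\ell)$ acting between consecutive linear layers, which is exactly the hypothesis of Theorem~5.4 of \citet{ziyin2024parameter}.

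\textbf{Step 1 (symmetry $\Rightarrow$ balancedness $\Rightarrow$ canonical representations).} First I would recall that a deep linear network $A$ with layers $W^A_1,\dots,W^A_{L_A}$ realizes $x\mapsto W^{(A)}x$ with $W^{(A)}:=W^A_{L_A}\cdots W^A_1$, and that the loss depends on the parameters only through $W^{(A)}$. The double rotation symmetry yields, by the Noether-type argument of \citet{hidenori2021noether}, the conserved quantities $W_{\ell+1}^\top W_{\ell+1}-W_\ell W_\ell^\top$ along gradient flow; since the relevant regime is small-norm initialization, these vanish at convergence, i.e.\ the network is balanced. For a balanced network with reduced SVD $W^{(A)}=U\Sigma V^\top$ (keeping the $r=\operatorname{rank}W^{(A)}$ nonzero singular values), every layer is forced to take the form $W^A_\ell=O_\ell\,\Sigma^{1/L_A}\,O_{\ell-1}^\top$ on the active $r$-dimensional subspace, with $O_\ell$ orthogonal, $O_0=V$, $O_{L_A}=U$; widths larger than $r$ only add null directions that drop out after normalization. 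Hence the layer-$\ell$ representation on input $z$ is $h^A_\ell(z)=O_\ell\,\Sigma^{\ell/L_A}\,V^\top z$. This canonical form is what I would import from Theorem~5.4.

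\textbf{Step 2 (invertibility pins down a shared map).} Network $A$ trains on $\mathcal D_M$, so at the global minimum $W^{(A)}$ is the (balanced, low-rank--biased) interpolant of $\{(Mx_i,y_i)\}$; feeding $A$ its natural input $z=Mx$ and using invertibility of $M$, I would show $W^{(A)}M=:\Theta$ is the corresponding interpolant of the \emph{raw} data $\{(x_i,y_i)\}$, which is independent of $M$ and of the architecture. The same argument gives $W^{(B)}M'=\Theta$ with the \emph{same} $\Theta$. Substituting $(V^A)^\top M=(\Sigma^A)^{-1}(U^A)^\top\Theta$ into Step~1 yields
\begin{equation}
    h^A_\ell(x)=O^A_\ell\,(\Sigma^A)^{\,\ell/L_A-1}\,(U^A)^\top\Theta\,x,\qquad
    h^B_m(x)=O^B_m\,(\Sigma^B)^{\,m/L_B-1}\,(U^B)^\top\Theta\,x.
\end{equation}
The nonzero-singular-value columns of $U^A$ and of $U^B$ are each orthonormal bases of $\operatorname{col}\Theta$, and the $x$-dependence enters only through $\Theta x\in\operatorname{col}\Theta$, which is identical for both networks; after normalization the two representations are therefore unit vectors in rotated copies of one common $r$-dimensional space, so $h_A(x)=R\,h_B(x)$ for a fixed orthogonal $R$ assembled from the change-of-basis rotations, the scalar being killed by the normalization. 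When $\Theta$ has a single nonzero singular value -- the scalar-target / neural-collapse case the paper stresses -- the diagonal factors are pure scalars and the alignment is exact for every pair of layers.

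\textbf{Anticipated obstacle.} The hard part is Step~1's transfer from the idealized gradient-flow conservation law to an SGD-trained, converged network, together with the claim that \emph{every} layer of $A$ aligns with \emph{every} layer of $B$ despite their different depths and widths: the powers $(\Sigma)^{\,\ell/L_A-1}$ generically differ across layers and between the two networks, so a literally ``perfect'' pairwise alignment needs the active spectrum of $\Theta$ to be effectively flat -- automatic when $\operatorname{rank}\Theta=1$, and otherwise the statement should be read modulo a diagonal rescaling on the $r$-dimensional active subspace (and modulo block rotations inside degenerate eigenspaces, which must be tracked when building $R$). A secondary point needing care is which global minimum is actually reached: the identification $W^{(A)}M=\Theta$ is where invertibility does its work, and it requires the implicit low-rank bias to select the same interpolant for both architectures.
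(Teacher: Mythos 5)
There is a genuine gap, and it is exactly the one you flag as an ``anticipated obstacle'': your Step~1 imports the wrong canonical form. You derive balancedness $W_{\ell+1}^\top W_{\ell+1}=W_\ell W_\ell^\top$ from the gradient-flow conservation law, which forces each layer to carry $\Sigma^{1/L}$ and hence each partial product to carry $\Sigma^{\ell/L}$. As you correctly observe, these powers differ across layers and across the two networks, so pairwise alignment of \emph{every} layer of $A$ with \emph{every} layer of $B$ only survives when the active spectrum is flat or $\operatorname{rank}\Theta=1$, and otherwise your conclusion holds only ``modulo a diagonal rescaling.'' The theorem as stated claims perfect alignment (up to a scalar and an orthogonal $R$) for arbitrary rank $d$, so the argument as written does not prove it.

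The paper closes this gap by using a structurally different characterization of the solution: Theorem~5.4 of \citet{ziyin2024parameter} describes the \emph{SGD noise equilibrium}, not gradient-flow balancedness. At the noise equilibrium the singular values are \emph{not} spread evenly as $\Sigma^{1/L}$; instead the intermediate layers satisfy $\Sigma_i=(\Tr S'/d)^{1/D}\,I$ (proportional to the identity on the active subspace) while only the first and last layers carry $\sqrt{S'}$. Consequently every partial product $\Pi_{i=1}^{L}W_i\,M$ equals $U_L\,c\,\sqrt{S'}\,\tilde V$ with the \emph{same} spectral factor $\sqrt{S'}$ for every $1\le L<D$, the depth and layer index entering only through the scalar $c_0=(\Tr S'/d)^{(2L_A-D_A)/2D_A-(2L_B-D_B)/2D_B}$. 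This is what makes the all-pairs, arbitrary-rank alignment work, and it is also why the theorem's hypotheses include SGD training and noise equilibrium rather than mere convergence of GD. Your Step~2 (invertibility of $M$, $M'$ being absorbed into the first layer so that the downstream representations are $M$-independent) matches the paper's treatment, but the heart of the proof is the noise-equilibrium spectral structure, which your proposal replaces with a weaker statement that does not suffice.
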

See Figure~\ref{fig:representation alignment}. This is an extraordinary fact: due to the double rotation symmetry, there exist infinitely many global minima for a deep linear network such that the representations are not aligned. Yet, SGD training finds a special solution such that every layer of $A$ is aligned with every layer of $B$ despite having different initializations, different data transformations (controlled by the arbitrary matrix $M$), and different architectures (width and depth). This is only possible if the first layer transforms the representation into an input-independent form. See Figure~\ref{fig:representation alignment} for a demonstration. This theorem is a direct (perhaps the first) proof of the platonic representation hypothesis, implying that for any $x_1, x_2$, Eq.~\eqref{eq: universal alignment} holds. Importantly, the mechanism does not belong to any previously conjectured mechanisms (capacity, simplicity, multitasking \cite{huh2024platonic}). This example has nothing to do with multitasking. The result holds for any deep linear network, all having the same capacity and the same level of simplicity because all solutions parametrize the same input-output map. Here, the cause of the universal representation is symmetry alone: in the degenerate manifold of solutions, the training algorithm prefers a particular and universal one. This example showcases how symmetry is indeed an overlooked fundamental mechanism in deep learning.

\begin{wrapfigure}{r!}{0.42\linewidth}
 \centering   \includegraphics[width=\linewidth]{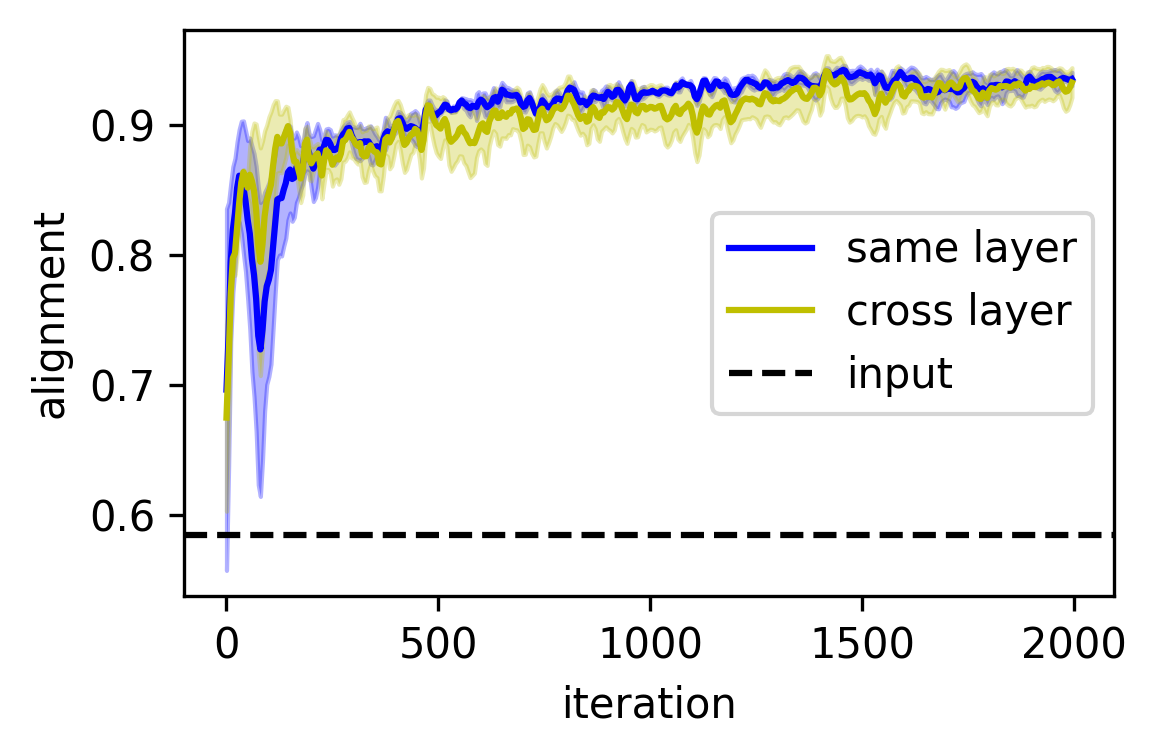}
    \caption{\small Universally aligned representations emerge in differently trained neural networks. Many recent works demonstrate that different trained neural networks learn representations that are similar to each other and even to the biological brain \cite{huh2024platonic, kornblith2019similarity, yamins2014performance}. Parameter symmetry may be a core mechanism for this intriguing phenomenon as it implies a universal alignment between representations \cite{ziyin2024parameter}. The figure is reproduced from Ref.~\cite{ziyin2025neuralthermodynamicsientropic} and shows the representations of two deep linear networks independently trained on randomly transformed MNIST become perfectly aligned for \textbf{every} pair of layers. The figure shows the average alignment between the same or different layers of two networks. This alignment does not weaken even if the input is arbitrarily transformed (Theorem~\ref{theo: alignment}). The black dashed line shows the average alignment to the input data, which is significantly weaker.}
    \label{fig:representation alignment}
    \vspace{-3em}
\end{wrapfigure}
An important future step is to generalize it to nonlinear networks. 
Due to extensive connectivity among the global minima of overparametrized networks \cite{nguyen2019connected}, there is a great potential that parameter symmetry (in exact or approximate forms) plays a similar role in overparameterized neural networks \cite{zhao2022symmetry, zhao2023understanding}.


\section{Mechanism and Control}\label{sec: mechanisms}

\paragraph{Mechanism:} Since symmetry breaking is just the lack of symmetry restoration, we may focus on restoration. A primary known mechanism for symmetry restoration is regularization in explicit or implicit forms. In explicit form, a simple weight decay has been shown to universally turn symmetric solutions energetically favorable to symmetry-broken solutions \cite{ziyin2024symmetry}, in a manner similar to phase transition in physics \cite{landau2013statistical}. In implicit form, the stochasticity in SGD is known to lead to an ``implicit" regularization effect, sometimes similar to that of weight decay \cite{kunin2021rethinking, chen2023stochastic}. Another mechanism is data augmentation, as it can also be seen as a form of regularization \cite{dao2019kernel}. While understanding these mechanisms is an open problem, the easiest way to study it is to consider the entropic loss function (also known as the ``modified loss") under SGD training \cite{geiping2021stochastic, smith2021origin, ziyin2025neuralthermodynamicsientropic}:
\begin{equation}
    L(\theta) = \underbrace{\E_\epsilon [L_0 (\theta, x_\epsilon)]}_{\text{learning + data aug.}} + \underbrace{\gamma \|\theta\|^2}_{\text{weight decay}} + \underbrace{T\, {\rm Tr}[\Sigma(\theta)]}_{\text{training noise}},
\end{equation}
where $L_0$ has the symmetry under consideration, $\Sigma$ is the gradient covariance matrix due to SGD sampling, and $T=\eta/S$ is the learning-rate-to-batch-size ratio. The data augmentation term can also be controlled by, say, a variance term $\sigma$. A careful inspection suggests that none of these terms breaks the $Z_2$ symmetries of $L_0$. So all $\sigma,\ \gamma,\ T$ can serve as regularizers that can induce a ``spontaneous symmetry breaking," which means that the symmetric states are energetically favored solutions at a critical regularization strength \cite{ziyin2024symmetry}. We experimentally demonstrate these effects in Section~\ref{app sec: mechanism}.  

With explicit regularization, whether symmetry breaking or restoration happens depends on the competition between regularization $\gamma$ and the data signal. The more curious situation is when $\gamma =0$ and $T>0$. This type of symmetry breaking is more like a thermodynamic phase transition in physics as it is induced by an entropic force due to the SGD noise. Why do noisy dynamics favor symmetric states? One intuition is that the ``temperature" of the dynamics is parameter-dependent, and stochastic dynamics tend to move to places that are ``cold," a common phenomenon in nature \cite{duhr2006molecules, anzini2019thermal}. 
In other words, temperature gradient tends to create a flow. In the case of SGD dynamics on a neural network, the symmetric solutions are `` colder" because they have a lower-rank $\Sigma$ than the symmetry-broken solutions. This discussion offers a likely intuition for the mechanism, but a more formal treatment of the argument is still lacking. Different training techniques likely favor different types of symmetries, and one may leverage symmetry to understand the inductive biases of deep learning techniques.  

\paragraph{Control:} If hierarchies and complexity adaptivity are desired, one can introduce artificial symmetries to the neural network. A straightforward way to introduce symmetries is by introducing an additional parameter $v$ and multiplying it with the parameters $W$ for which a hierarchy is desired. For example, changing a layer weight from $W$ to $vW$ introduces a new symmetry without affecting the model expressivity: $|v|=\|W\|=0$ is the symmetric state, and the model will naturally adapt to break or restore this symmetry. This is a special case of the DCS algorithm proposed in Ref. \cite{ziyin2024symmetry}. A simple way to remove symmetries is to add some static randomness to the model, which has been found to help with the model connectivity \cite{lim2024empirical} and avoid saddle points \cite{ziyin2024remove}. Finding the right symmetries to introduce or remove can be a fruitful future direction.

\section{Outlook and Alternative Views}\label{sec: conclusion}
In this position paper, we have leveraged existing results to argue that parameter symmetry is a key for unifying theories of deep learning because a wide range of seemingly unrelated hierarchies emergent in AI systems is actually related to, if not directly caused by, the symmetry of the trained models. Symmetry breaking and restoration are fundamental mechanisms in physics relevant to the dynamics of almost every scale of nature -- from scattering of the quarks to the large-scale structure formation of the universe \cite{miller1990charge, preskill1991cosmology}. If different symmetries govern the universal laws at different scales of nature \cite{anderson1972more}, it is natural to hypothesize that it may also lead to universal mechanisms and laws of learning in both artificial and biological systems. We suggest three primary hypothetical mechanisms that govern three fundamental aspects of hierarchical learning of neural networks, and validating or falsifying their related hypotheses is the most important next step. From a broader perspective, that symmetry can play such important roles in learning also calls for more interdisciplinary study of intelligence from the perspective of physics.

There are certainly alternative views to our position. 

\vspace{-4mm}
\paragraph{There is no unified explanation of deep learning:} This may very well be the case, and even laws of physics cannot explain every observation in nature. However, it is still highly possible that a small set of laws is sufficient to explain a large fraction of observations. For example, it is well possible that there may exist ten minimal laws with which we can explain 95\% of the deep learning phenomenon, and if that is the case, symmetry may be one of those laws.

\vspace{-4mm}
\paragraph{Symmetry alone is insufficient:} 
In most of our examples, it is a combination of symmetry and some other effect that leads to the phenomenon. For example, the compression bias of SGD is a result of symmetry \textit{and} training noise. Neural collapse is due to symmetry \textit{and} regularization \cite{rangamani2022neural}. 
Therefore, it is possibly the case that symmetry \textit{plus} some form of explicit (weight decay) or implicit (noise, GD training, data augmentation) regularization is the dominating factor, and this may be an important direction of future research.

\vspace{-4mm}
\paragraph{Symmetry may not be necessary:} While symmetry is possibly a unified perspective to view many phenomena, there are cases where a subset of these phenomena are exhibited but does not require symmetry. For theoretical purposes, this suggests that there can be other concepts as important as symmetry. However, from a design and practice perspective, knowing only one way to achieve a design goal often suffices -- and the parameter symmetry may be sufficient. For example, there may be multiple ways for neural collapse to happen, but to introduce neural collapse to the desired model, one only needs to know one such way.


\bibliographystyle{plain}

\appendix

\section{Experiments}
\subsection{Measurement of $\Delta^G$}\label{app sec: measurement}
\paragraph{Permutation Symmetry} For the permutation symmetry in fully connected layers, we have described the $\Delta^G$ for these pairwise symmetries. However, for a layer of width $\psi$, there are $O(\psi^2)$ many such pairs, but we do not have to care about every pair of these because, for most of our purposes, we only care about how many neurons are actually functional and how many neurons are useless. This fact can allow us to reduce the number of measurements to $\psi$. To achieve this, we first sort all the neurons according to the norm of the input and outgoing weights. Because for two neurons to become close, their norms also need to be close. Therefore, if two neurons have a large norm difference, their permutation symmetry must have been broken. Under this ordering, we measure the $\Delta^G$ for the pairs of neurons with the closest norms.

An alternative perspective to look at this is that these pairwise neuron distances form a generating set of the symmetric group, and we are counting the number of symmetry breaking of these subgroups generated by each generator.

\paragraph{Omission of G} We will write $\Delta^G$ as $\Delta$ throughout the appendix because we are often comparing $\Delta$ for different groups and so the superscript $G$ is different for most cases.

\paragraph{Double Rotation Symmetry} Computing the degree of symmetry for the double rotation symmetry is rather tricky. 
In the ViT experiment (Figure~\ref{fig:complexity upper bound}), we measured the degree of symmetry in the self-attention layers. Here, the symmetry is the double rotation symmetry:
\begin{equation}
    W_Q W_K = W_Q M M^{-1} W_K,
\end{equation}
for an arbitrary invertible matrix $M$. The symmetric states are the ones where $W_Q$ and $W_K$ both become low-rank in the same subspace. Namely, it happens when there exists a vector $n$ such that 
\begin{equation}
    W_Q n =0,\ W_K^T n =0.
\end{equation}
There are at most $k$ such $n$ where $k$ is the right dimension of $W_Q$. This motivates this group's following operational definition of $\Delta$. We first compute the eigenvalue decomposition of $W_Q^T W_Q$ to obtain its eigenvectors $U$:
\begin{equation}
    W_Q^T W_Q = U \Lambda_Q U^T.
\end{equation}
We then compute the following matrix:
\begin{equation}
    \Lambda_K =  U^T W_K W_K^T U,
\end{equation}
and the degree of symmetry is defined as 
\begin{equation}
    N_{\rm dos} = \sum_i^k \mathbbm{1}_{\Delta_i < \Delta_{\rm th}} ,
\end{equation}
where 
\begin{equation}
    \Delta_i = |(\Lambda_Q)_{ii} - (\Lambda_K)_{ii}|.
\end{equation}

\paragraph{Degree of Symmetry} The \textit{degree of symmetry} we compute is the number of all  $\Delta^G$ such that $\Delta^G> \Delta^G_{\rm th}$, where $\Delta^G_{\rm th}$ is a certain threshold, often between $0.05$ and $0.2$. If this layer has $h$ many neurons, we define
\begin{equation}
    N_{\rm dosb} = h - N_{\rm dos}\,,
\end{equation}
and by definition $N_{\rm dosb} \ge 0$.

\subsection{Learning Dynamics}\label{app sec: exp detail}
\begin{figure}
    \centering
    \includegraphics[width=0.35\linewidth]{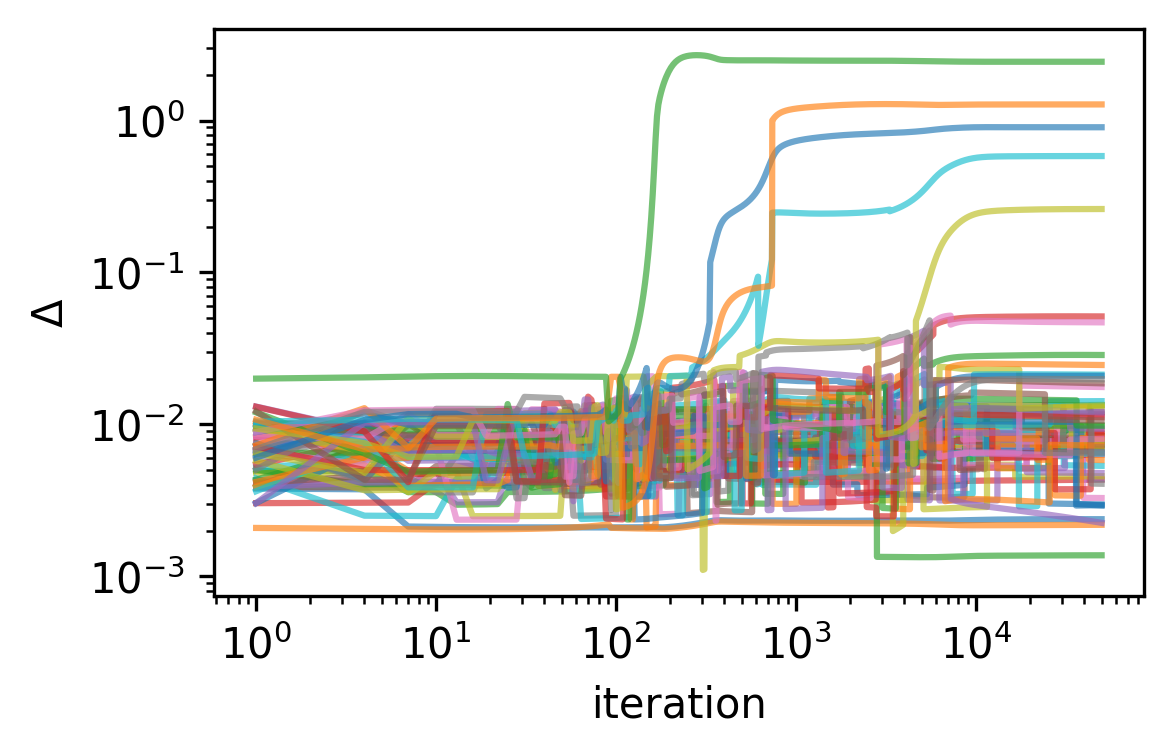}
    \includegraphics[width=0.35\linewidth]{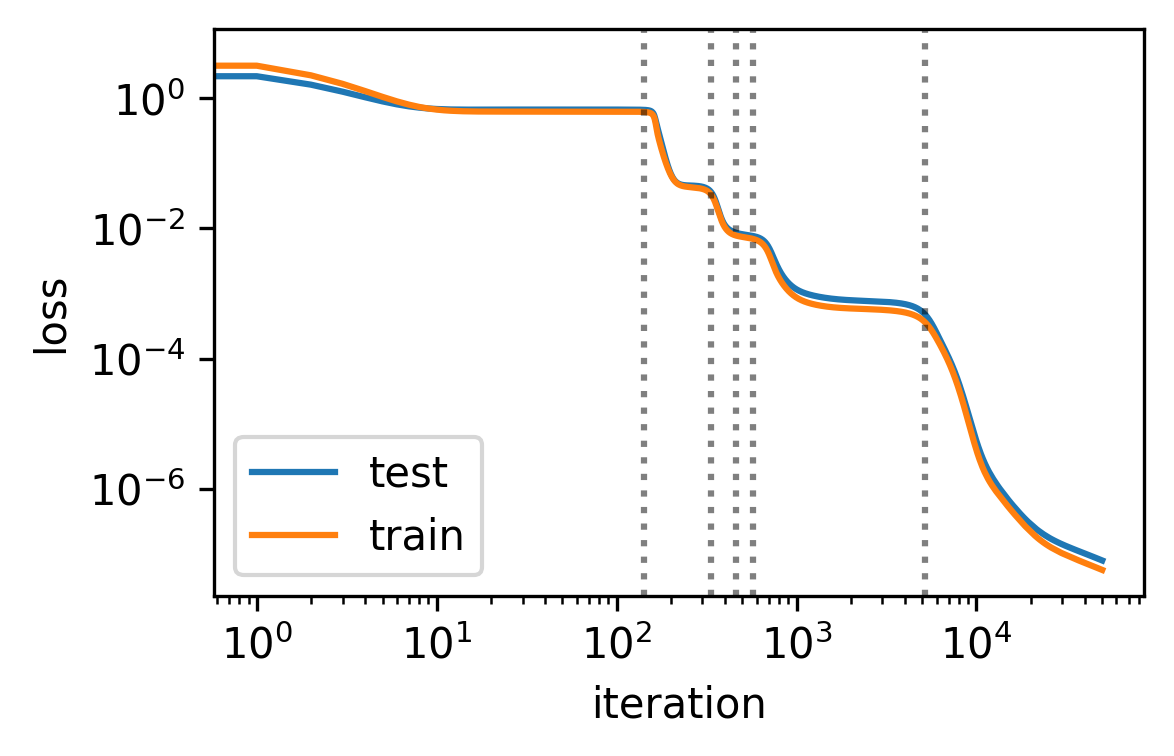}
    \caption{The same setting as Figure \ref{fig:symmetry breaking dynamics}, but for quadratic activation. \textbf{Left}: $\Delta$ during training. \textbf{Right}: loss curve during training. Black dotted lines represent the time when $\Delta$ exceeds $\Delta^G_{\rm th}$.}
    \label{fig:symmetry breaking dynamics2}
\end{figure}
\begin{figure}
    \centering
    \includegraphics[width=0.35\linewidth]{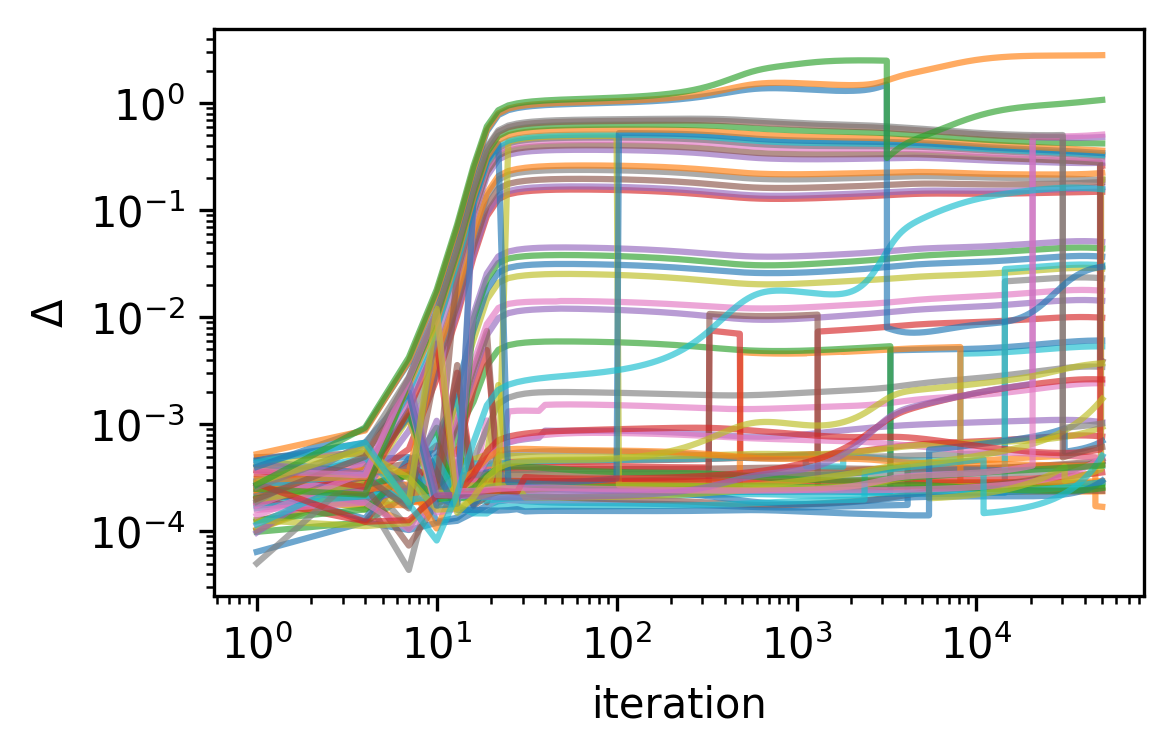}
    \includegraphics[width=0.35\linewidth]{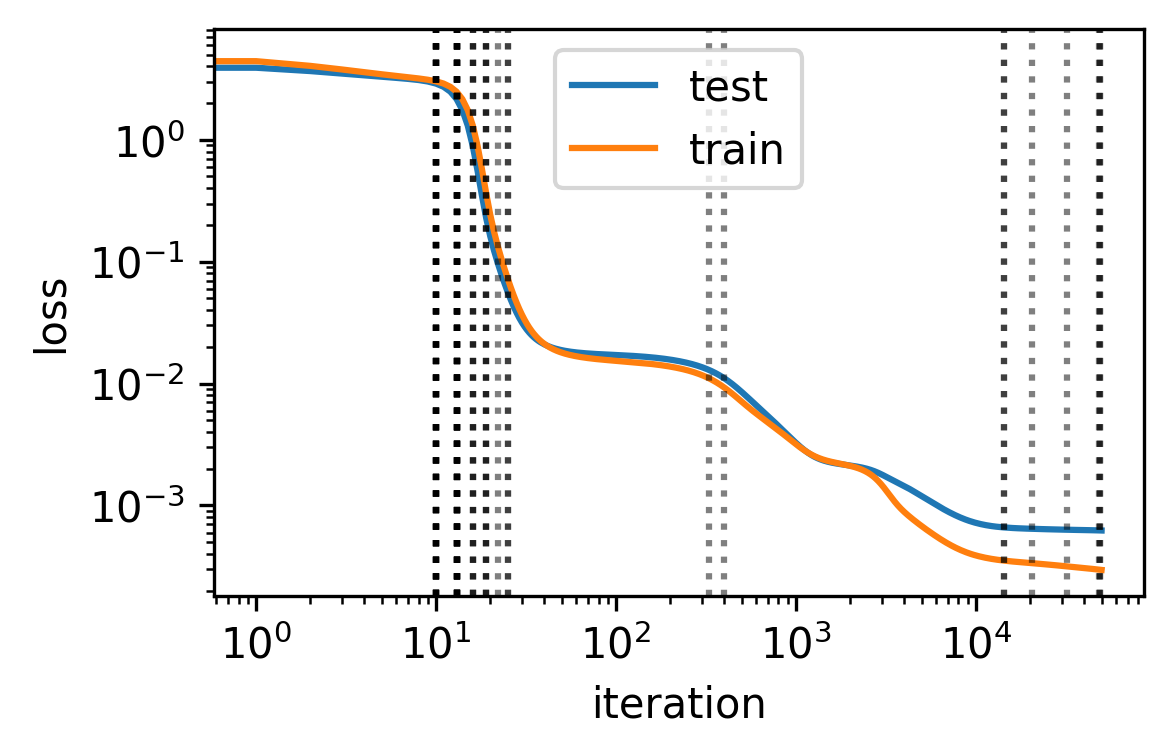}
    \caption{The same setting as Figure \ref{fig:symmetry breaking dynamics}, but for sin activation.}
    \label{fig:symmetry breaking dynamics3}
\end{figure}

\begin{figure}[t!]
    \centering
    \includegraphics[width=0.35\linewidth]{plot/rank.png}
    \includegraphics[width=0.35\linewidth]{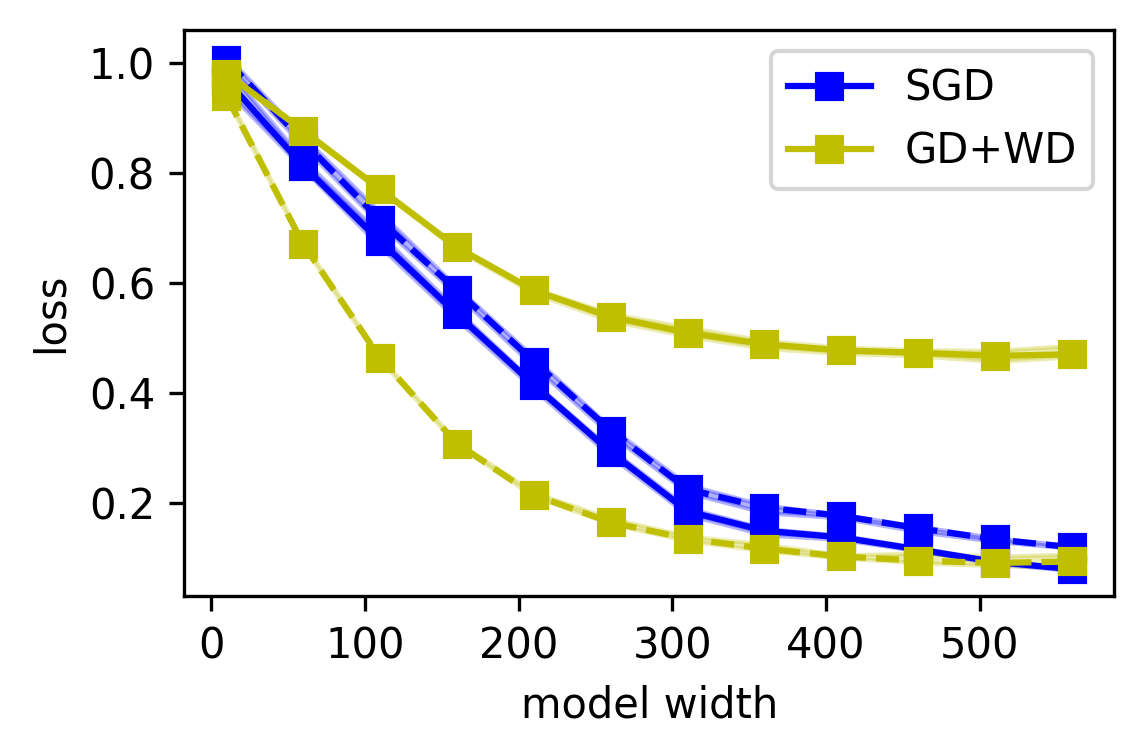}
    \caption{\small The same as Figure \ref{fig:complexity upper bound}, and we report the rank of the first layer for completeness.}
    \label{fig:complexity upper bound2}
\end{figure}

\begin{figure}[t!]
    \centering
    \includegraphics[width=0.3\linewidth]{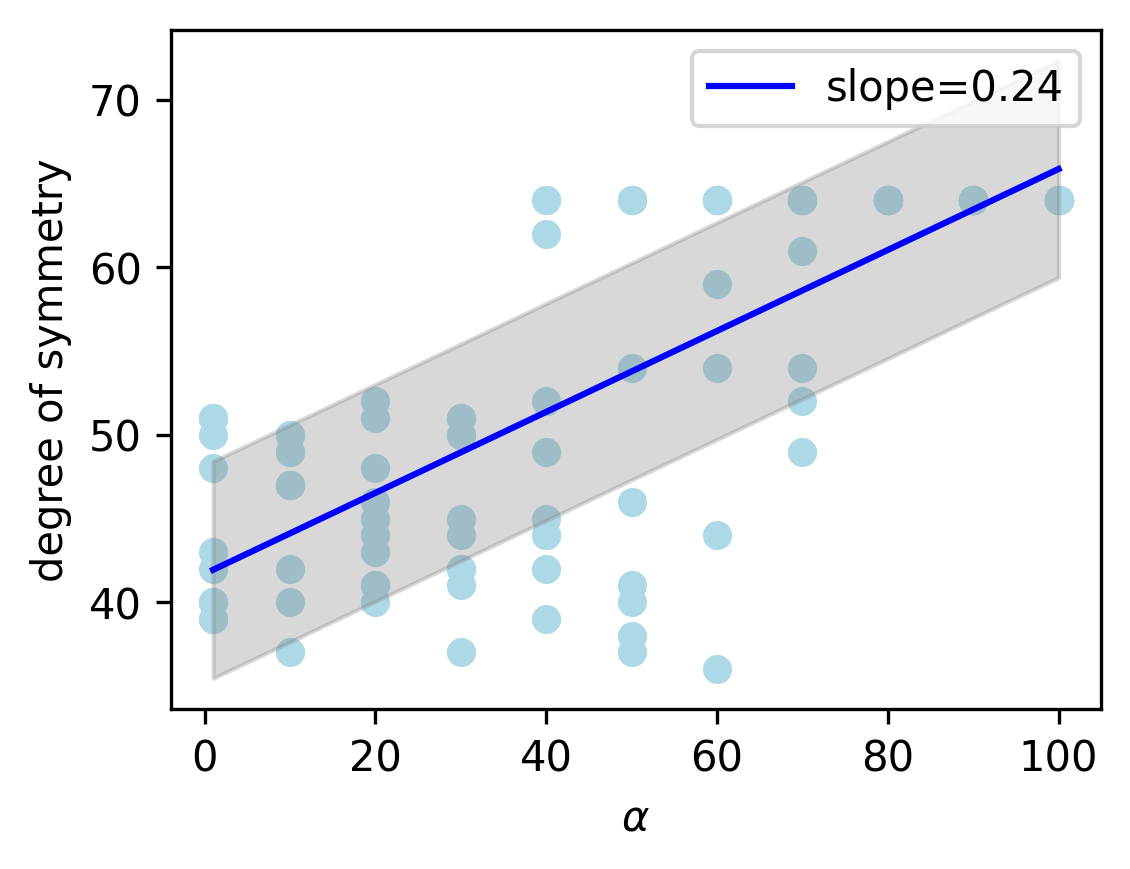}
    \includegraphics[width=0.3\linewidth]{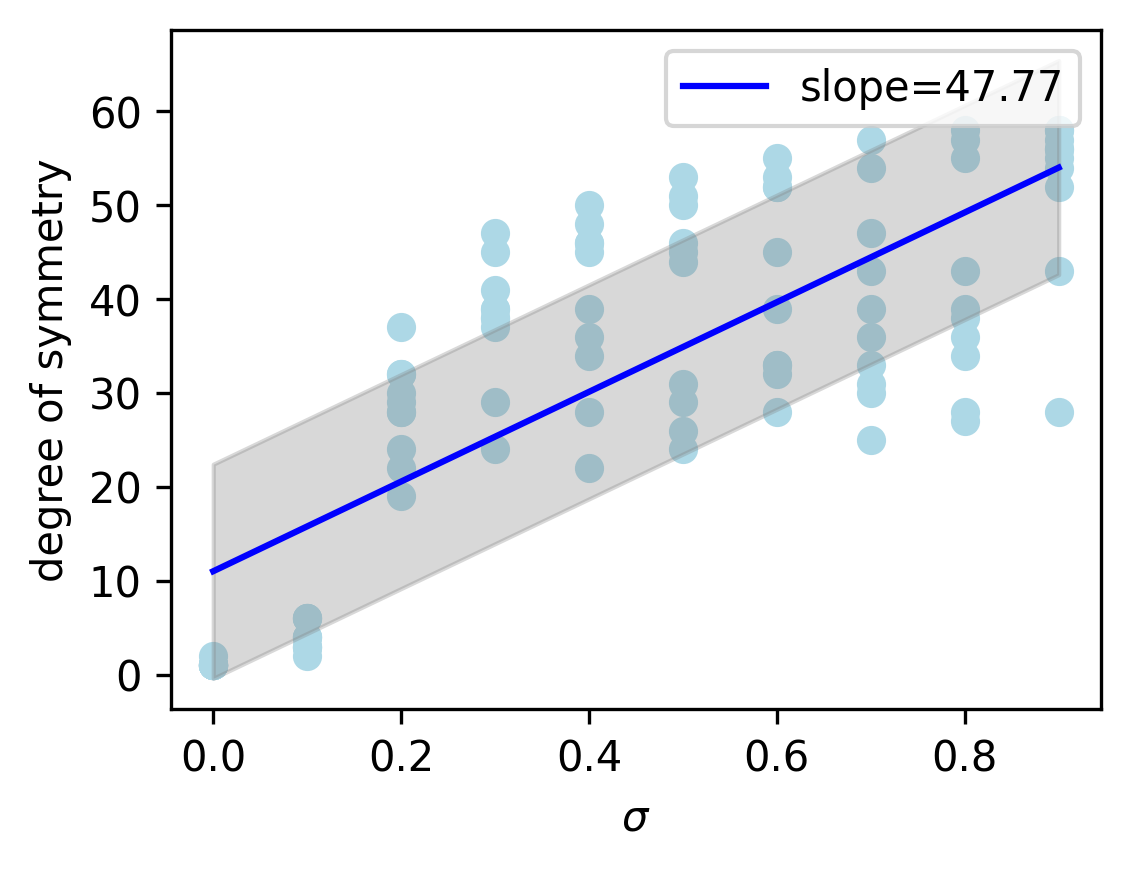}
    \includegraphics[width=0.3\linewidth]{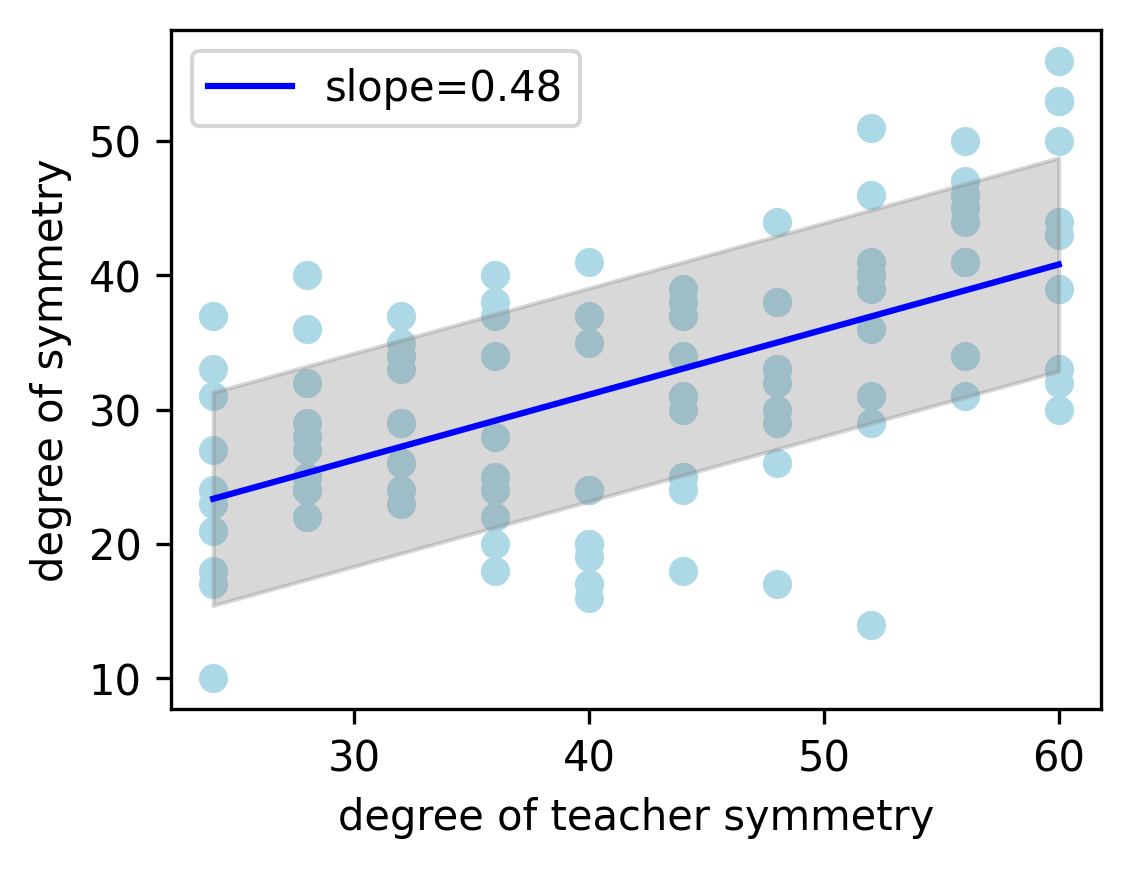}

    \includegraphics[width=0.3\linewidth]{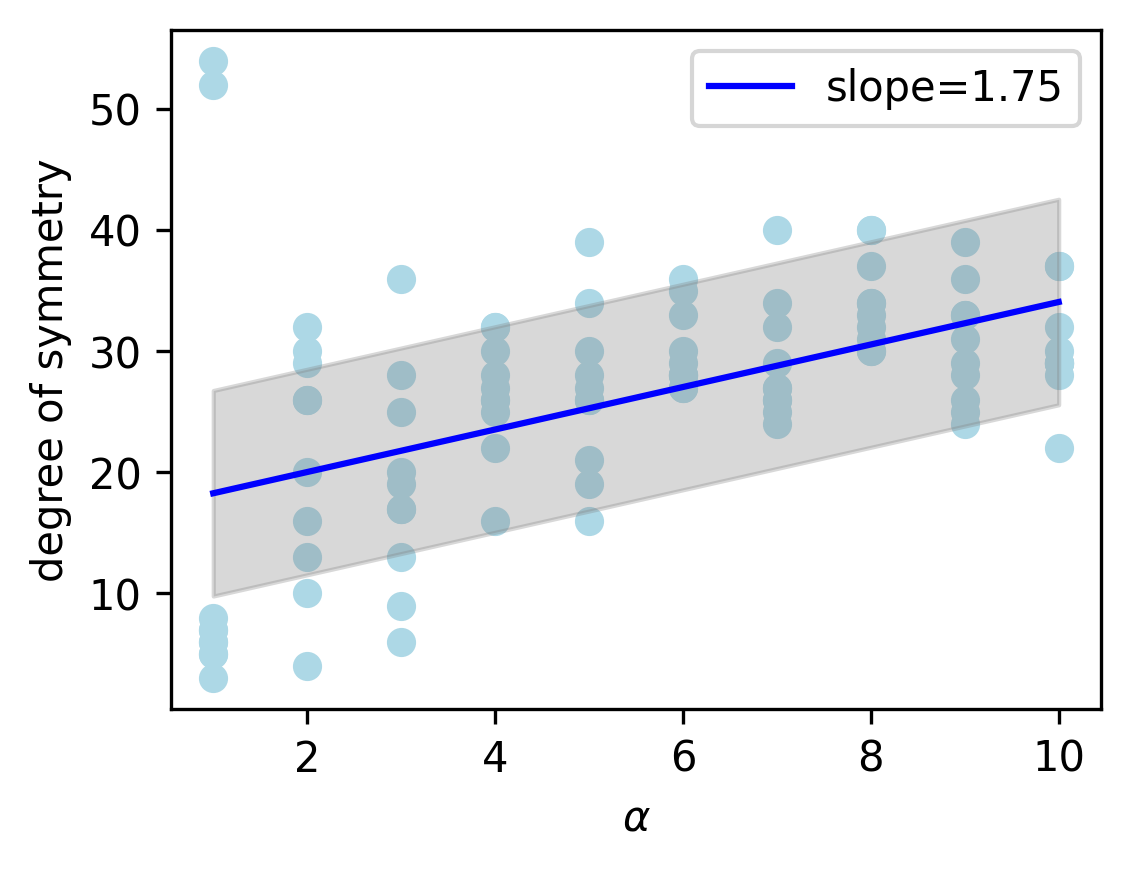}
    \includegraphics[width=0.3\linewidth]{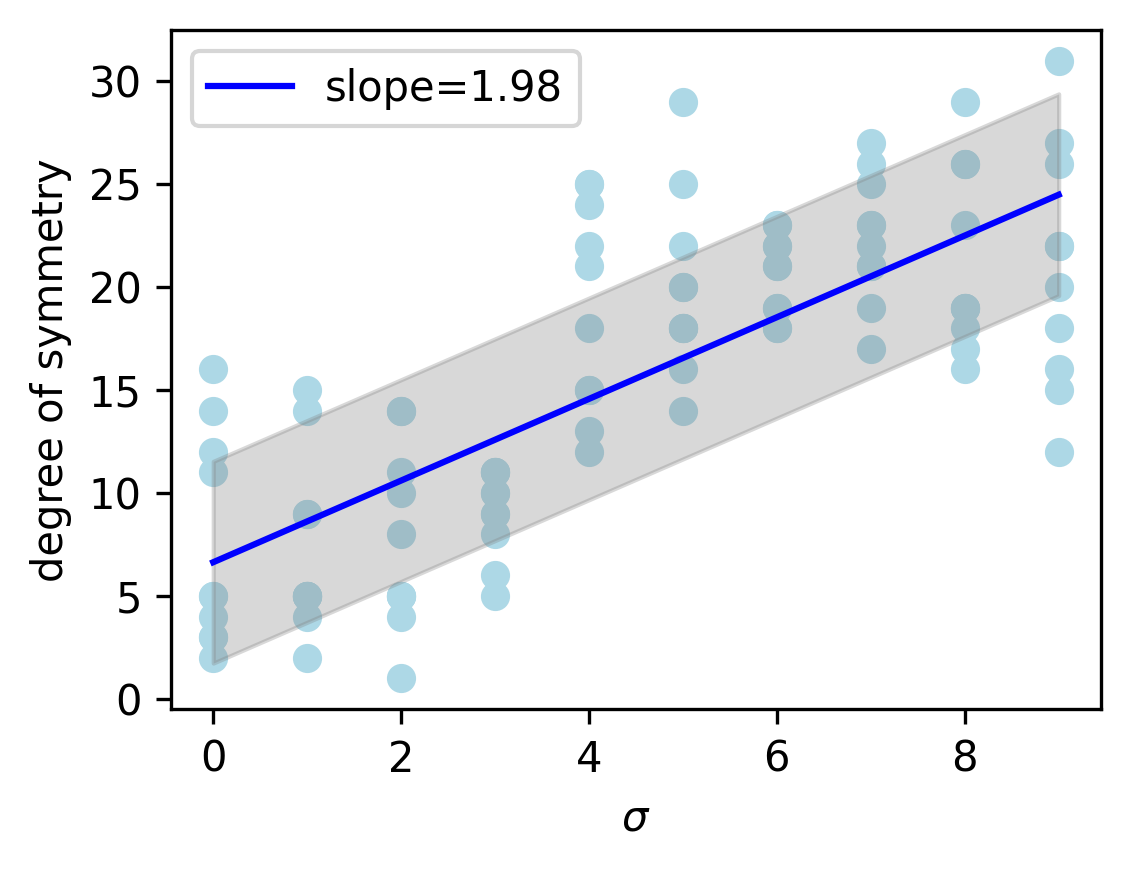}
    \includegraphics[width=0.3\linewidth]{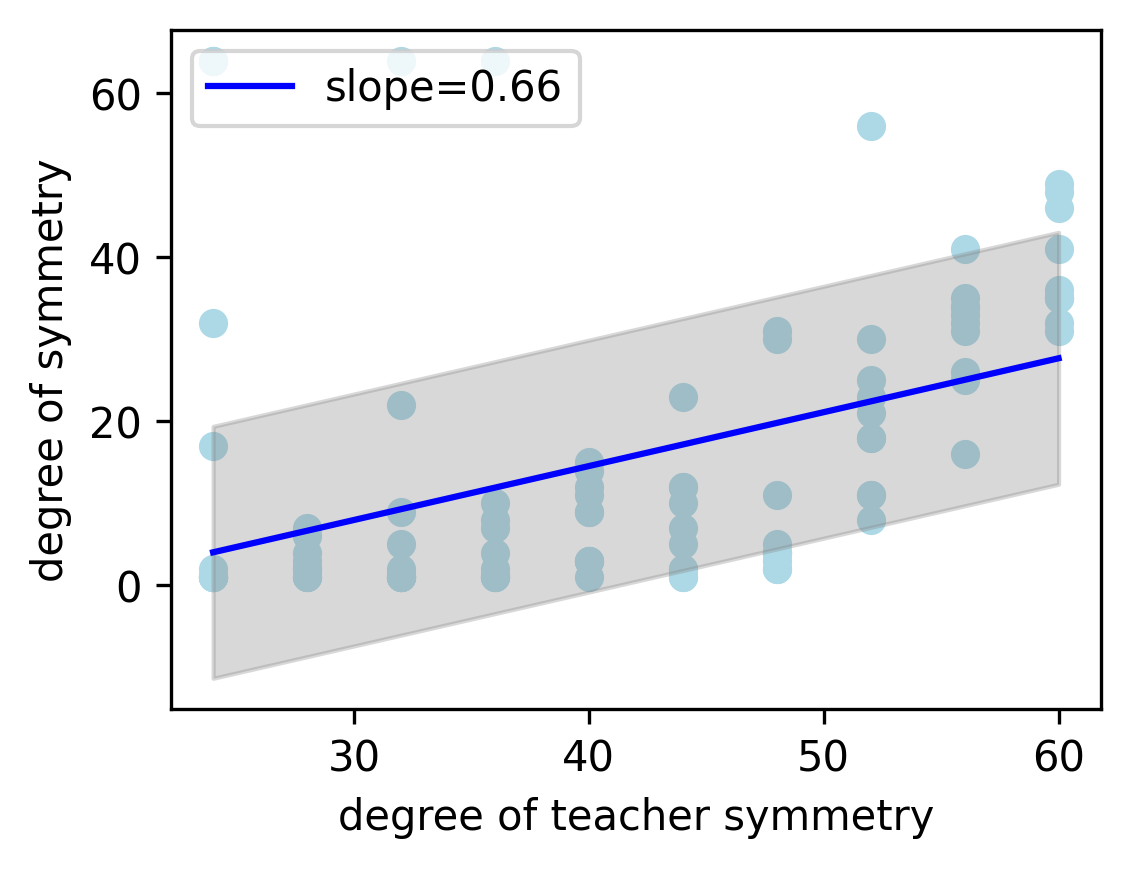}
    \caption{\small The degree of symmetry increases with $\alpha$, the inverse scaling of input (\textbf{Left}), $\sigma$, the noise level (\textbf{Middle}), and the degree of teacher symmetry ($64-$the number of teacher units, \textbf{Right}). Blue dots represent experiment results, and blue lines represent linear fitting. \textbf{Upper}: MLP. \textbf{Lower}: transformer.}
    \label{fig:degree_symmetry}
\end{figure}

In Figure \ref{fig:symmetry breaking dynamics}, we use a teacher-student setting, where both the teacher and student are five-layer fully connected networks (FCNs) with 64 units per layer and tanh activation. The input and output dimensions are $10$ and $1$, respectively. Teacher network weights follow Kaiming initialization, and the output is scaled by 10. The student network is initialized with different scales. The training dataset consists of $300$ samples generated as 
$\sin(\mathcal{N}(0,\sigma^2))$, with $\sigma$ ranging from $0$ to $3$. The test dataset contains 500 samples from the same distribution. We train the student network with gradient descent (no weight decay) for $50,000$ iterations using the mean squared error (MSE) loss.

To calculate the symmetry-breaking distance, we sort the first-layer neurons by their norms and compute the $L_2$ distance between consecutive neurons. Results for quadratic and sin activations are presented in Figures \ref{fig:symmetry breaking dynamics2} and \ref{fig:symmetry breaking dynamics3}, including distances for $63$ neuron pairs in the first layer (see Section~\ref{app sec: measurement}).

\subsection{Mechanisms for Symmetry Changes}\label{app sec: mechanism}

For the left panel of Figure \ref{fig:complexity upper bound}, we directly use the pretrained models of ViT-Base and ViT-Large from \url{https://pytorch.org/vision/stable/models.html}. The computation of the weight rank and $N_{\rm dosf}$ follows the outline in Section~\ref{app sec: measurement}.

In the middle panel of Figure \ref{fig:complexity upper bound}, we train a three-layer fully connected network (FCN) with swish activation on a synthetic dataset where inputs and outputs are identical, sampled from a 300-dimensional standard Gaussian distribution. For gradient descent (GD), the dataset size is $1000$, and weight decay is set to $10^{-3}$. For stochastic gradient descent (SGD), the batch size is $128$, with new data randomly generated for each batch. In both cases, the test dataset contains $1000$ samples, and the networks are trained using the Adam optimizer. For completeness, we report the rank of the first layer in Figure \ref{fig:complexity upper bound2}, which shows that both the rank and the generalization error remain constant when the model width gets larger.

In Figure \ref{fig:degree_symmetry}, we analyze how symmetry evolves with input scaling, label noise, and the teacher symmetry. We adopt the teacher-student framework from Figure \ref{fig:symmetry breaking dynamics}, using Gaussian inputs for the teacher network and the Adam optimizer with a weight decay of $10^{-2}$. Symmetry is quantified as the number of first-layer neuron pair distances below $0.1$ among $63$ pairs in the student network. Teacher symmetry is measured as $64-$the number of neurons in each teacher network layer. As expected, stronger noise, weaker inputs, or teachers with higher symmetry (simpler target functions) result in greater symmetry in the student network.

The second row of Figure \ref{fig:degree_symmetry} presents experiments on a simple transformer comprising two self-attention layers and one linear layer. We use an in-context learning dataset with input sequences of length $50$, where each sequence element is a $21$-dimensional token. The last dimension of each token is a linear combination of its first $20$ dimensions, sampled from a standard Gaussian. The label corresponds to the last dimension of the final token in the sequence. The transformer is trained with the AdamW optimizer (weight decay $10^{-2}$). The results for the transformer align with those for the MLP.

\subsection{Representation Learning}\label{app sec: neural collapse}
\begin{figure}[t!]
    \centering
    \includegraphics[width=0.4\linewidth]{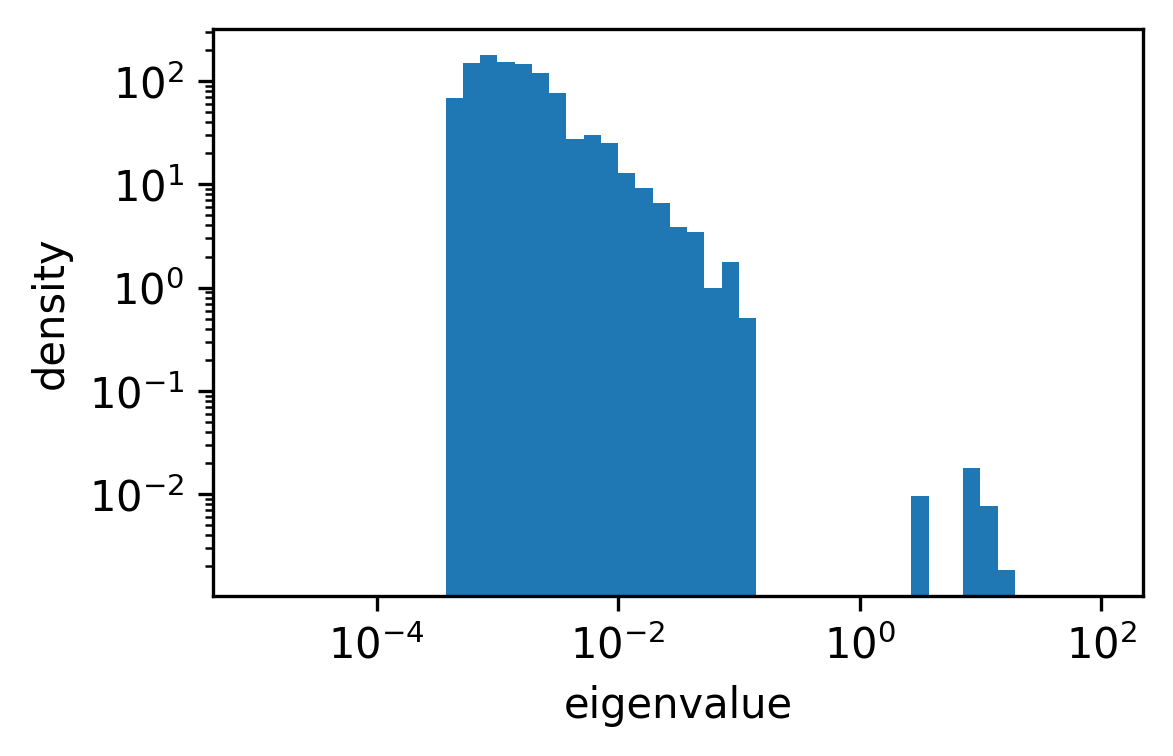}
    \includegraphics[width=0.4\linewidth]{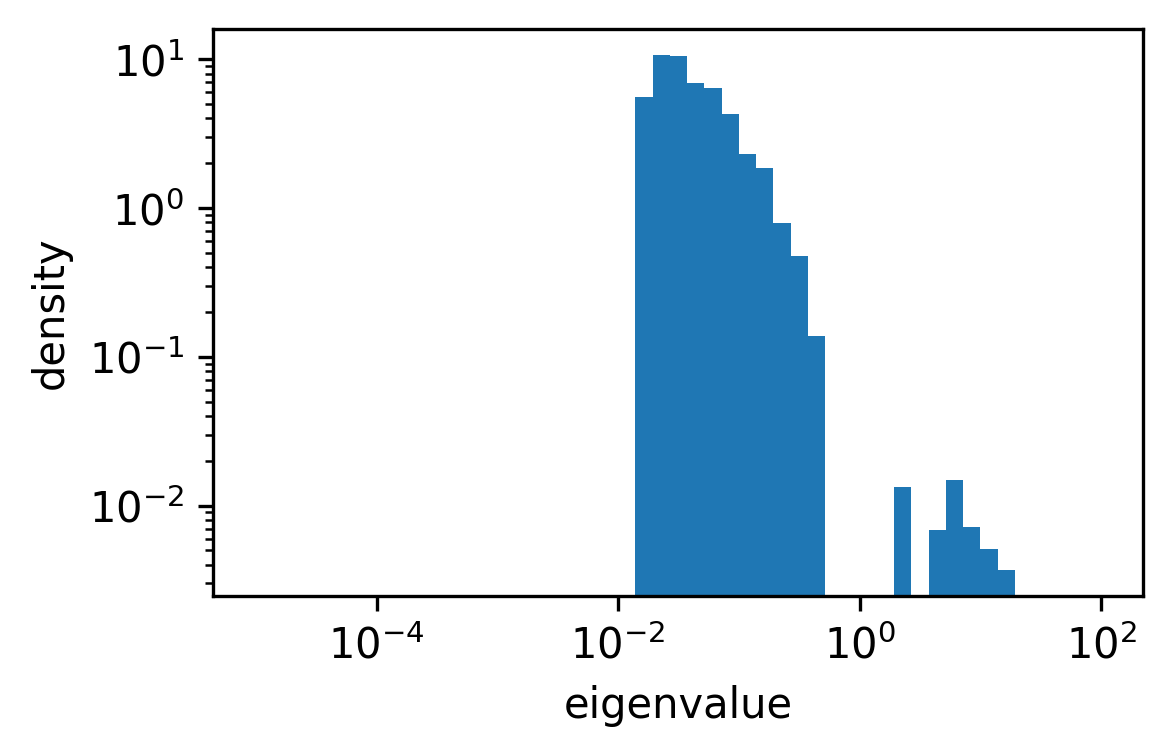}
    \caption{The spectrum of the representation correlation. \textbf{Left}: the vanilla model. \textbf{Right}: the syre model. While both models are low rank, the gaps between eigenvalues are smaller for the syre model.}
    \label{fig:neural collapse_eigs}
\end{figure}

\begin{figure}[t!]
    \centering
    \includegraphics[width=0.35\linewidth]{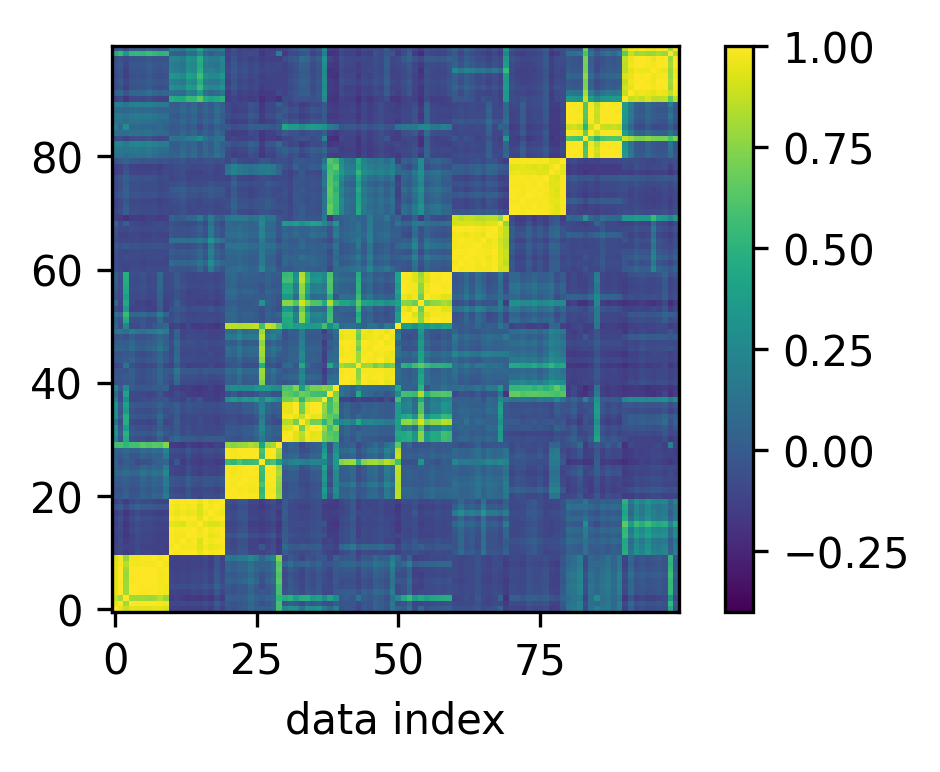}
    \includegraphics[width=0.35\linewidth]{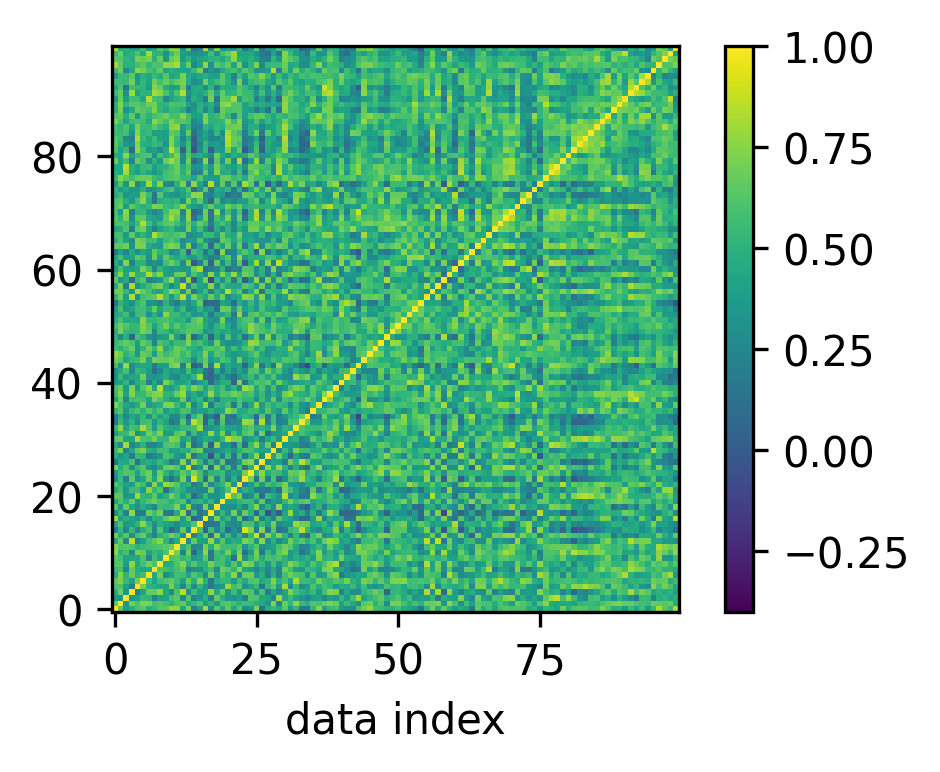}
    \caption{The same setting as Figure \ref{fig:neural collapse}, but with weight decay ten times larger.}
    \label{fig:neural collapse2}
\end{figure}

\begin{figure}[t!]
    \centering
    \includegraphics[width=0.4\linewidth]{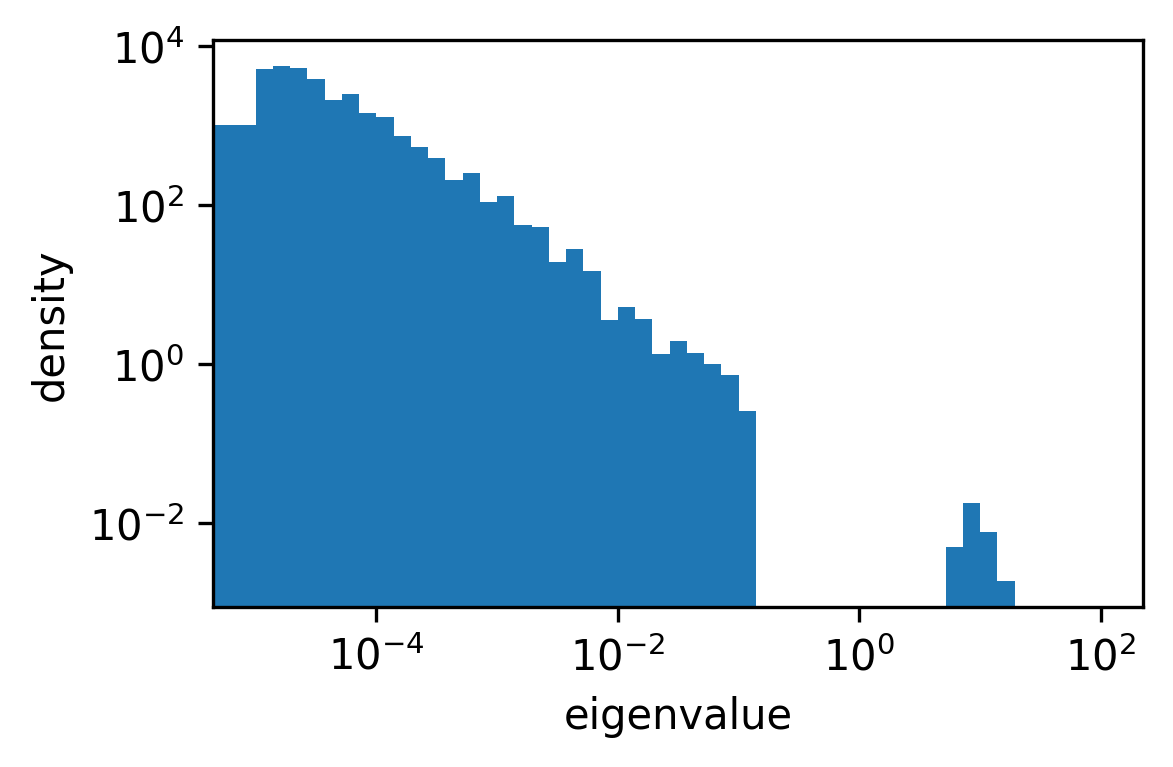}
    \includegraphics[width=0.4\linewidth]{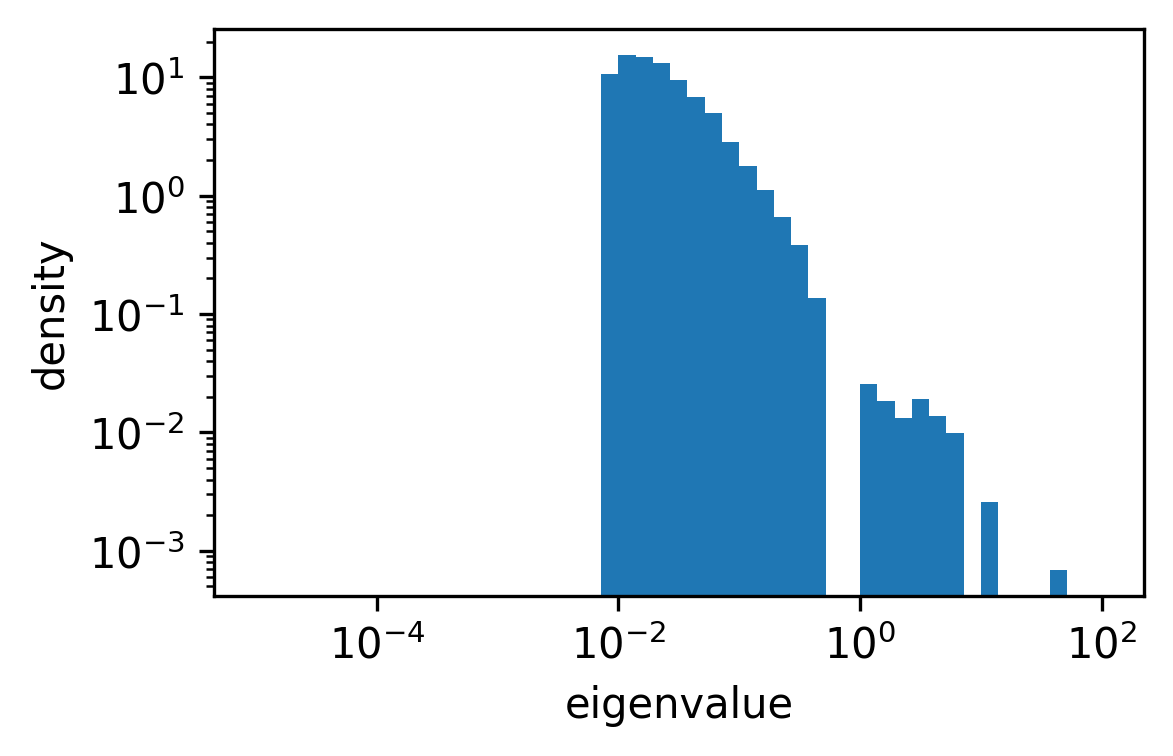}
    \caption{The same setting as Figure \ref{fig:neural collapse_eigs}, but with weight decay ten times larger. The low-rank structure is largely suppressed.}
    \label{fig:neural collapse_eigs2}
\end{figure}

\begin{figure}[t!]
    \centering
    \includegraphics[width=0.37\linewidth]{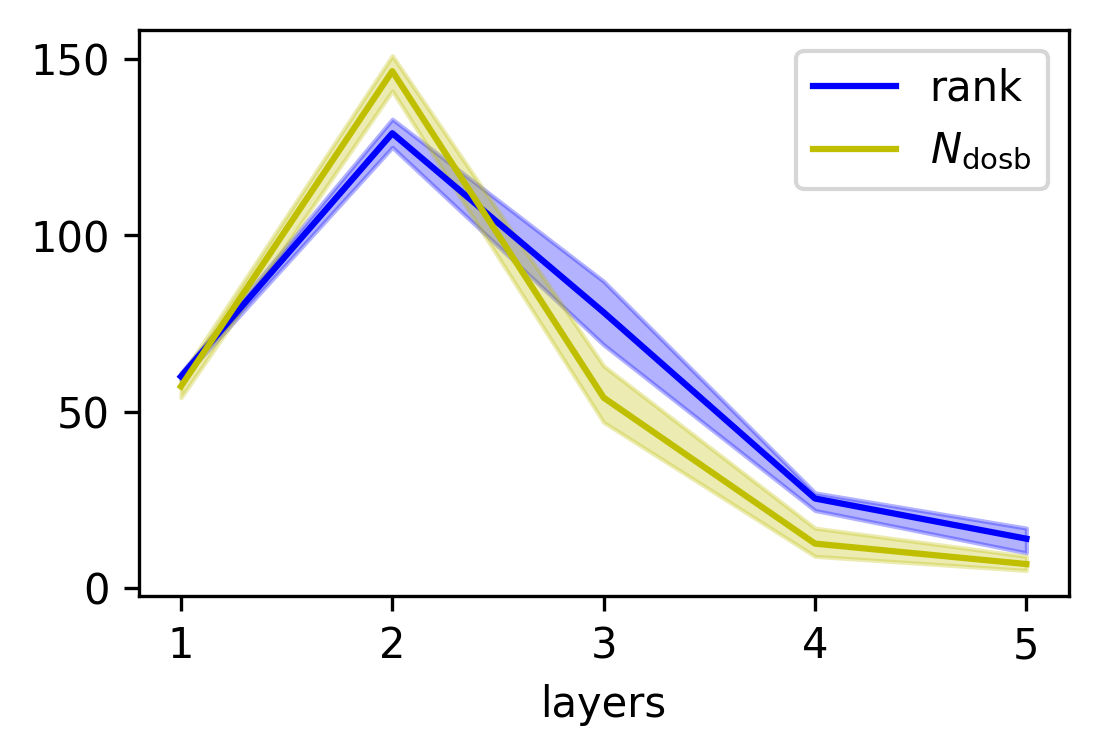}
    \includegraphics[width=0.4\linewidth]{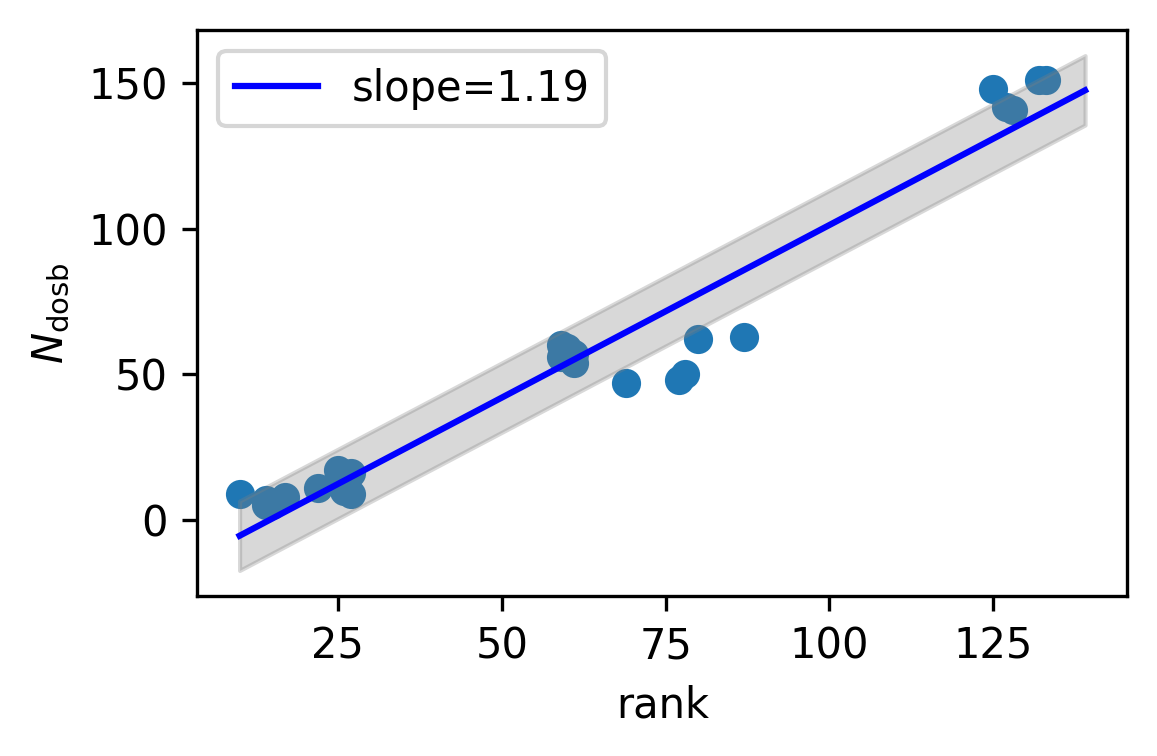}
    \caption{\small The same setting as Figure \ref{fig:hierachy}, but on the SVHN dataset.}
    \label{fig:hierachy2}
\end{figure}

\begin{figure}[t!]
    \centering
    \includegraphics[width=0.4\linewidth]{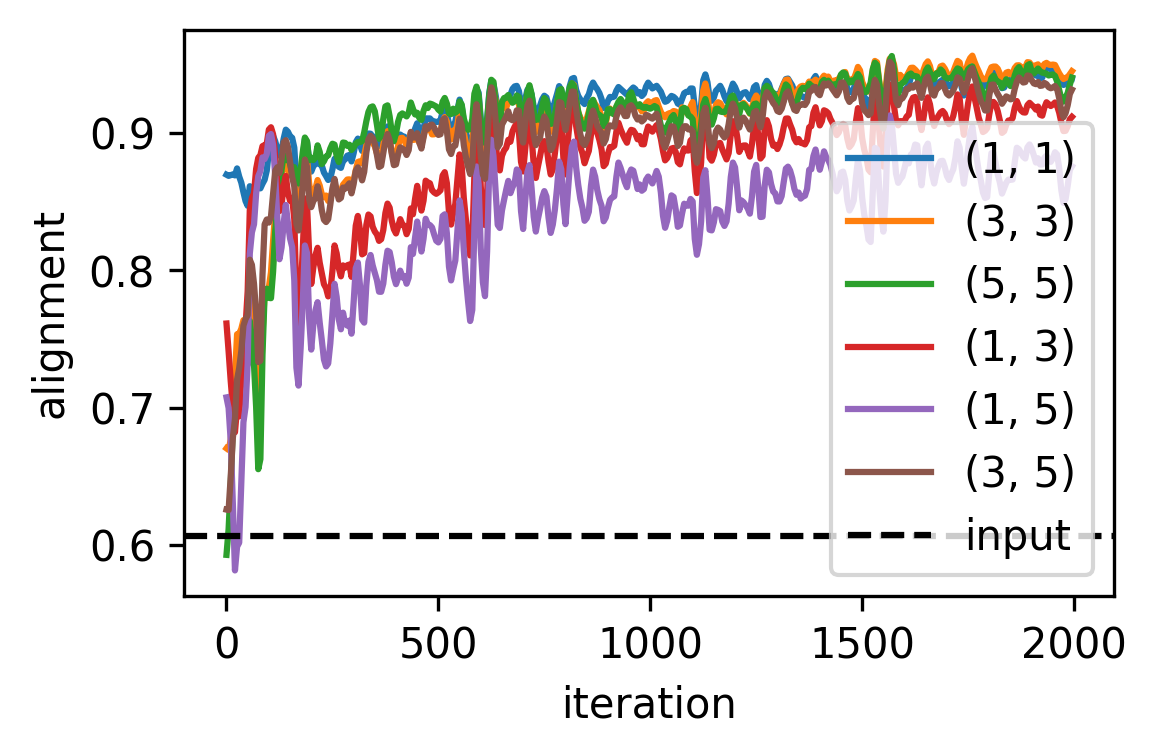}
    \includegraphics[width=0.4\linewidth]{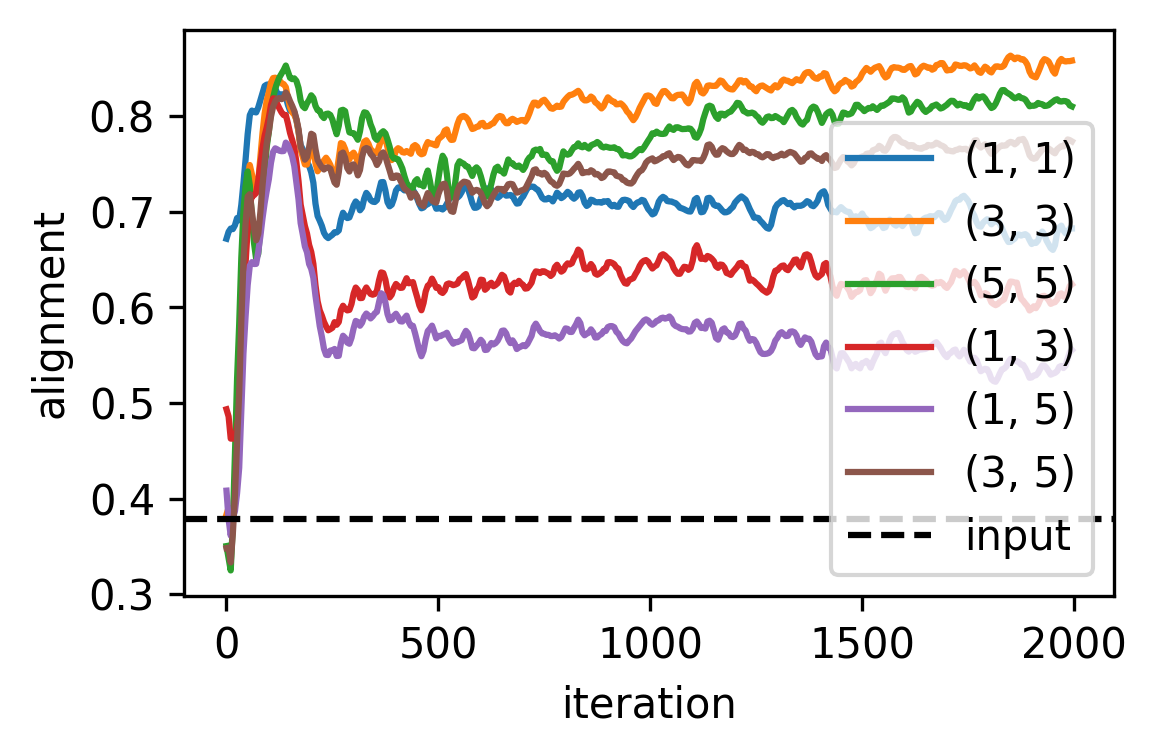}
    \caption{\small An extension of Figure \ref{fig:representation alignment}, where two networks have different number of neurons. The legend $(i,j)$ denotes the alignment between the $i$-th layer of network A and the $j$-th layer of network B. \textbf{Left}: linear network. \textbf{Right}: tanh network.}
    \label{fig:representation alignment2}
\end{figure}

In Figure \ref{fig:neural collapse}, we train ResNet18 on the CIFAR-10 dataset using vanilla weight decay (loss $\ell(\theta)+\gamma||\theta||^2$) and syre (loss $\ell(\theta)+\gamma||\theta-\theta_0||^2$) \cite{ziyin2024symmetry}. In both cases, the weight decay $\gamma$ is set to $5\times10^{-4}$. Networks are trained for $200$ epochs, and results from the final epoch are reported. To generate Figure \ref{fig:neural collapse}, we randomly select 10 images per class and compute the correlation between their features (i.e., the input to the final layer). The corresponding eigenvalue spectrum is presented in Figure \ref{fig:neural collapse_eigs}, showing 10 prominent eigenvalues. Figures \ref{fig:neural collapse2} and \ref{fig:neural collapse_eigs2} extend these experiments with a weight decay of $5\times10^{-3}$, where the vanilla weight decay model exhibits a stronger low-rank structure compared to the syre model.

In Figure \ref{fig:hierachy}, we train a five-layer FCN with $512$ neurons per layer and swish activation on CIFAR-10. Symmetry breaking is evaluated as the number of pairwise distances (as in Figure \ref{fig:symmetry breaking dynamics}) exceeding $1$. Results are averaged over 5 independent runs. Figure \ref{fig:hierachy2} replicates this experiment on the SVHN dataset, where the same hierarchical representation effects can be observed.

In Figure \ref{fig:representation alignment}, we train two deep linear networks on MNIST using the Adam optimizer, where the input of the second one is randomly transformed by an invertible matrix, , whose elements are sampled from the standard Gaussian distribution. Each network is a six-layer fully connected network (FCN) with 128 neurons per hidden layer. The average alignment to input data is computed as the average alignment between the data and the third layer during training. Figure \ref{fig:representation alignment2} explores a similar setting for both linear and tanh networks, and the two networks have hidden layers with $64$ and $128$ neurons, respectively, where the universal alignment effects still hold.

\clearpage
\subsection{Adaptive Capacity}\label{app sec: adaptive capacity}
\begin{figure}[t!]
    \centering
    \includegraphics[width=0.245\linewidth]{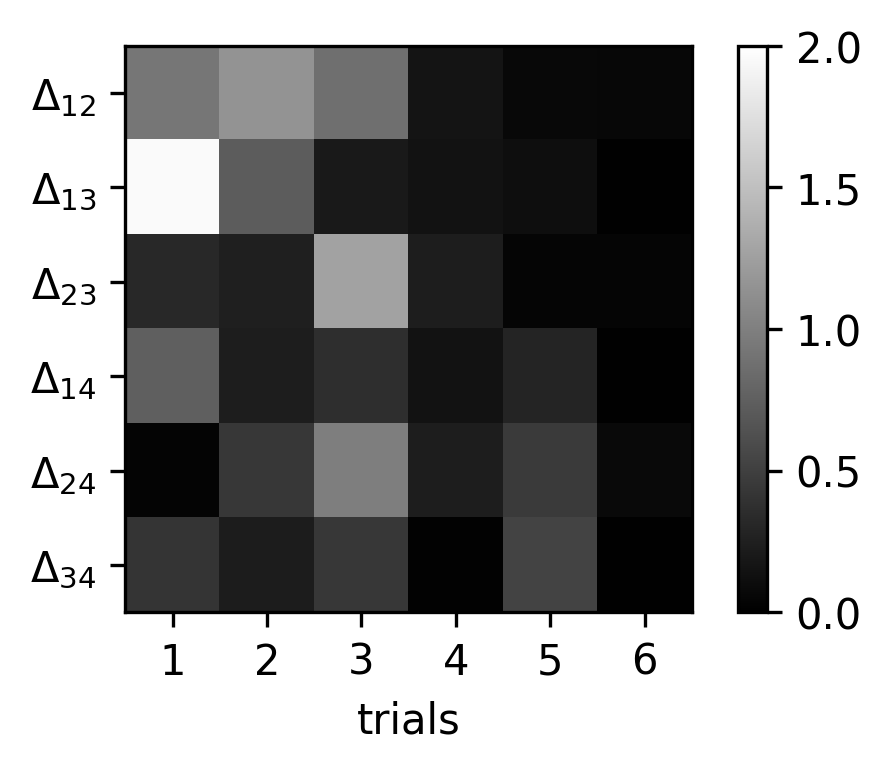}
    \includegraphics[width=0.245\linewidth]{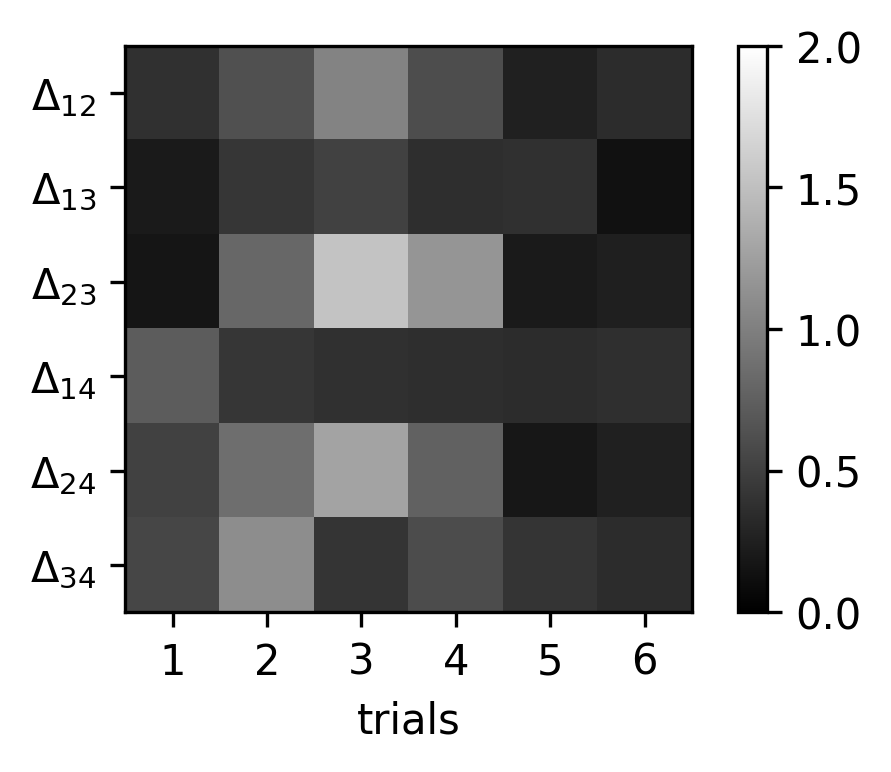}
    \includegraphics[width=0.245\linewidth]{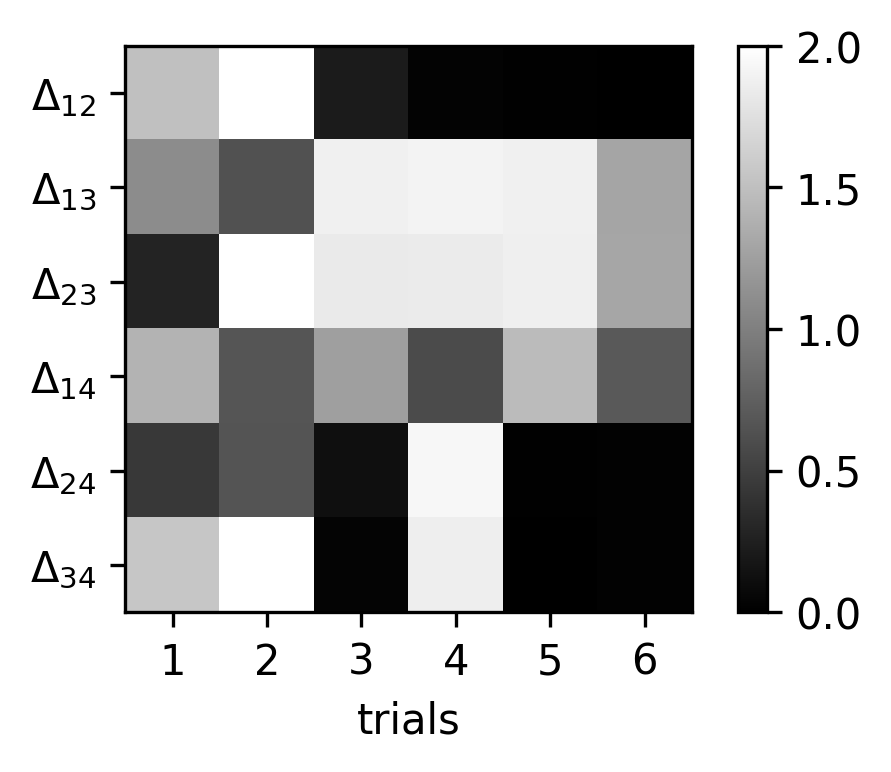}
    \includegraphics[width=0.245\linewidth]{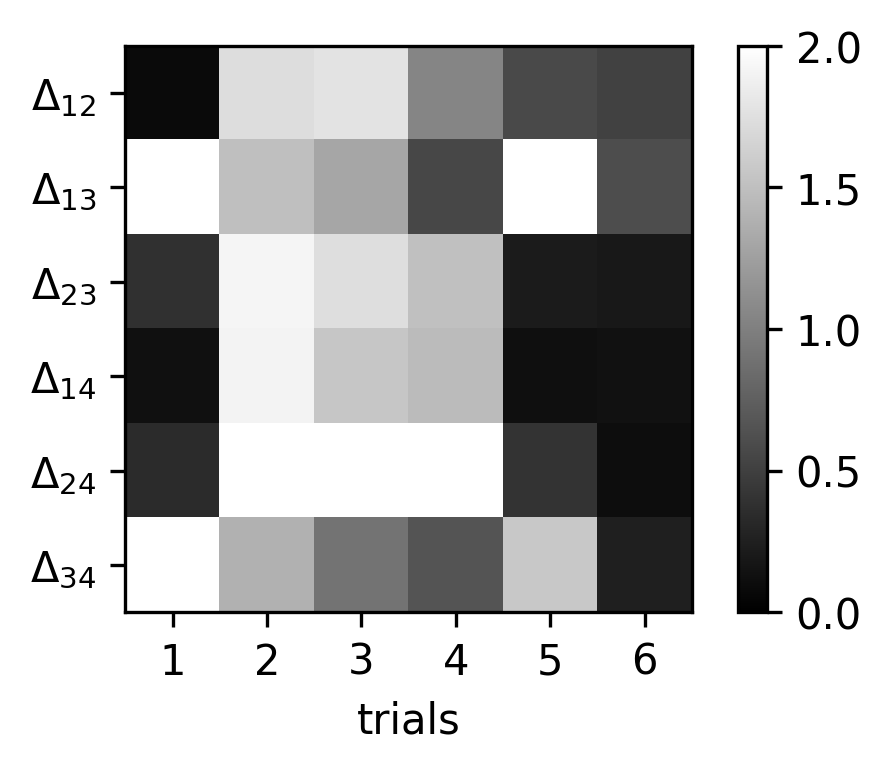}
    \caption{The final distance between the weights of neurons for different teachers and the same initialization. For different tasks, the model converged to a different symmetry class. Left to right: (1) MLP small init., (2) MLP large init., (3) transformer small init., (4) transformer large init.}
    \label{fig:final learning result}
\end{figure}

In Figure \ref{fig:final learning result}, we train two-layer MLPs and simple transformers on synthetic datasets. The tasks match Figure \ref{fig:degree_symmetry}, and we report the pairwise distances among six neurons. Figure \ref{fig:final learning result} suggests that symmetry adapts to the training data in each run.

\clearpage
\section{Theory}
\subsection{Formal Statement of Theorem \ref{theo: complexity}}
We consider the MSE loss for part 2 of the theorem.
\begin{equation}
    \ell(x,\theta)=||y(x)-f(x,\theta)||^2.
\end{equation}
Consider an empirical data distribution $P(x)$, where $x$ contains both the input and the label. The SGD iteration is defined as 
\begin{equation}
    \theta_{t+1}=\theta_t- \eta\nabla_\theta\ell(x,\theta_t),
\end{equation}
where $x \sim P(x)$ and $\eta$ is the learning rate.

The GD iteration is defined as
\begin{equation}
    \theta_{t+1}=\theta_t- \eta\nabla_\theta \E_{x \sim P(x)}[\ell(x,\theta_t)].
\end{equation}

\begin{theorem}
    Let $f$ have the $G$-symmetry for which $P_G^T=P_G$, and $\theta$ be intialized at $\theta_0$ such that $P_G\theta_0 =\theta_0$.
    \begin{enumerate}[noitemsep,topsep=0pt, parsep=0pt,partopsep=0pt, leftmargin=14pt]
    \item For all time steps $t$ under GD or SGD, there exists a model $f'(x,\theta')$ and sequence of parameters $\theta'_t$ such that for all $x$,
    \begin{equation}
        f'(x,\theta_t') = f(x, \theta_t),
    \end{equation}
    where ${\rm dim}(\theta') = {\rm dim}(P_G)$.
        \item The kernalized model, $g(x,\theta) = \lim_{\lambda\to 0} (\lambda^{-1} f(x, \lambda\theta+\theta_0) - f(x, \theta_0))$, converges to
    \begin{equation}
        \theta^* = A^+ \sum_x \nabla_\theta f(x,\theta_0)^T y(x) 
        \label{eq:LeastSquare}
    \end{equation}
    under GD for a sufficiently small learning rate. Here, $A:=P_G\sum_x \nabla_\theta f(x,\theta_0)^T\nabla_\theta f(x,\theta_0)P_G$ and $A^+$ denotes the Moore–Penrose inverse of $A$. 
    \end{enumerate}
\end{theorem}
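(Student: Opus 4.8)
Everything rests on one elementary lemma: a $G$-symmetry forces the gradient of $f$ (hence of any loss built on $f$) to lie in $\operatorname{Im}(P_G)$ whenever the parameter already lies there. Given that, Part~1 is just "the iterates never leave $\operatorname{Im}(P_G)$, so restrict the model to that subspace", and Part~2 is the same observation applied to the linearized model, followed by a routine analysis of a linear GD recursion.

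\textbf{Step 1 (gradient invariance).} Differentiating the defining identity $f(g\theta,x)=f(\theta,x)$ in $\theta$ gives $g^{\mathsf T}\nabla_\theta f(g\theta,x)=\nabla_\theta f(\theta,x)$ for every $g\in G$. If $P_G\theta=\theta$ then $g\theta = gP_G\theta = P_G\theta = \theta$, so $g^{\mathsf T}\nabla_\theta f(\theta,x)=\nabla_\theta f(\theta,x)$; averaging over $g\in G$ and using the hypothesis $P_G^{\mathsf T}=P_G$ yields $P_G\nabla_\theta f(\theta,x)=\nabla_\theta f(\theta,x)$, i.e.\ every output-component gradient lies in $\operatorname{Im}(P_G)$. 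Consequently, for any loss of the form $\ell(f(x,\theta),x)$ the parameter gradient $\nabla_\theta\ell=(\nabla_\theta f)^{\mathsf T}\nabla_f\ell$ also lies in $\operatorname{Im}(P_G)$ whenever $\theta\in\operatorname{Im}(P_G)$, since each column of $(\nabla_\theta f)^{\mathsf T}$ is one such gradient.

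\textbf{Step 2 (Part 1).} Since $\theta_0\in\operatorname{Im}(P_G)$ and, by Step~1, every full or stochastic gradient step adds a vector in $\operatorname{Im}(P_G)$, an induction shows $\theta_t\in\operatorname{Im}(P_G)$ for all $t$ under both GD and SGD (the stochastic gradient $\nabla_\theta\ell(x,\theta_t)$ is of the admissible form for each sampled $x$). Pick an orthonormal basis $B\in\mathbb R^{d\times r}$ of $\operatorname{Im}(P_G)$ with $r=\dim(P_G)$, so $BB^{\mathsf T}=P_G$ and $B^{\mathsf T}B=I_r$, and define $f'(x,\theta'):=f(x,B\theta')$ and $\theta'_t:=B^{\mathsf T}\theta_t$. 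Then $B\theta'_t = P_G\theta_t = \theta_t$, so $f'(x,\theta'_t)=f(x,\theta_t)$, and the chain rule $\nabla_{\theta'}\ell'(x,\theta') = B^{\mathsf T}\nabla_\theta\ell(x,B\theta')$ shows $\theta'_t$ is precisely the GD/SGD trajectory of $f'$; this proves part~(1) with $\dim(\theta')=r=\dim(P_G)$.

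\textbf{Step 3 (Part 2).} The limit defining $g$ is the first Taylor term, $g(x,\theta)=J_0(x)\theta$ with $J_0(x):=\nabla_\theta f(x,\theta_0)$, whose transposed columns lie in $\operatorname{Im}(P_G)$ by Step~1 at $\theta_0$. Training $g$ by GD on the MSE loss from $\theta=0$ therefore keeps every iterate in $\operatorname{Im}(P_G)$, and since the update $\theta\mapsto\theta+2\eta\sum_x J_0(x)^{\mathsf T}\bigl(y(x)-J_0(x)\theta\bigr)$ already lies in $\operatorname{Im}(P_G)$ it equals its $P_G$-projection, so it can be written as $\theta\mapsto\theta+2\eta(P_Gb-A\theta)$ with $b=\sum_x J_0(x)^{\mathsf T}y(x)$ and $A=P_G\bigl(\sum_x J_0(x)^{\mathsf T}J_0(x)\bigr)P_G$. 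For $\eta$ below the reciprocal of the top eigenvalue of $A$, this linear recursion contracts toward the solution set $\{A\theta=P_Gb\}$; starting from $0$ it selects $A^{+}P_Gb$. Finally $A$ is symmetric PSD, so $\operatorname{Im}(A^{+})=\operatorname{Im}(A)\subseteq\operatorname{Im}(P_G)$, giving $A^{+}P_G=A^{+}$, and any $v\in\ker(A)\cap\operatorname{Im}(P_G)$ satisfies $v^{\mathsf T}\!\bigl(\textstyle\sum_x J_0^{\mathsf T}J_0\bigr)v=v^{\mathsf T}Av=0$, hence $J_0(x)v=0$ for all $x$ and $v^{\mathsf T}b=0$; so no direction diverges and the limit is exactly $\theta^\ast=A^{+}b$, which is Eq.~\eqref{eq:LeastSquare}.

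\textbf{Main obstacle.} The conceptual crux is Step~1 together with the hypothesis $P_G^{\mathsf T}=P_G$, which makes $P_G$ an orthogonal projector and aligns "gradient invariant under $G^{\mathsf T}$" with "gradient in $\operatorname{Im}(P_G)$"; without symmetry of $P_G$ one only controls $\operatorname{Im}(P_G^{\mathsf T})$ and the reduction breaks. The most delicate bookkeeping is the convergence claim in Step~3: one must rule out drift along $\ker(A)\cap\operatorname{Im}(P_G)$ and verify that the GD limit is the pseudoinverse solution rather than some $\theta_0$-dependent offset (handled above by the PSD/eigenvalue argument and the learning-rate bound; a stochastic-approximation argument would be needed if one also wanted the SGD version of part~2).
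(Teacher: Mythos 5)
Your proof is correct and follows essentially the same route as the paper's: gradient invariance on $\operatorname{Im}(P_G)$ at symmetric points, restriction of the model to a basis of that subspace for Part 1, and the linear GD recursion $\theta^{t+1}=(I-2\eta A)\theta^t+2\eta P_G b$ summed to $A^{+}b$ for Part 2. If anything, your Step 1 (differentiating $f(g\theta,x)=f(\theta,x)$ and averaging over $G$) and your explicit verification that $P_G b\in\operatorname{Im}(A)$ are more careful than the paper's corresponding steps, which instead assert $\ell(x,\theta)=\ell(x,\theta_0+P_G(\theta-\theta_0))$ and sum the geometric series without ruling out drift along $\ker(A)$.
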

The second part of the theorem means that in the kernel regime\footnote{Technically, this is the lazy training limit \cite{chizat2018lazy}.}, being at a symmetric solution implies that the feature kernel features are being masked by the projection matrix $
    \nabla_\theta f(x,\theta_0)  \to P_G\nabla_\theta f(x,\theta_0)$,
and learning can only happen given these masks. The proof is a slight generalization of Propostions 2 and 3 in Ref. \cite{ziyin2024remove}.

\begin{proof}

\begin{enumerate}[noitemsep,topsep=0pt, parsep=0pt,partopsep=0pt, leftmargin=14pt]
\item Note that $\ell$ has the $G-$symmetry when $f$ has the $G-$symmetry. For $P_G\theta_0=\theta_0$ and any $\theta$, we have $\ell(x,\theta)=\ell(x,\theta_0+P_G(\theta-\theta_0))$. Taking $\theta\to\theta_0$, we have
\begin{equation}
(I-P_G)\nabla_\theta \ell(x,\theta_0)=0,\label{eq:nable f}
\end{equation}
where we use $P_G^T=P_G$. Therefore, for $\theta_1:=\theta_0+\eta\nabla_\theta \ell(x,\theta_0)$, we still have $P_G\theta_1=\theta_1$.

As $G$ is linear, we can suppose that $\{a_i\}_{i=1}^{\text{dim}(P_G)}$ forms a basis of $\text{im}(P_G)$, and define $f'(x,\theta'):=f(x,\sum_{i=1}^{\text{dim}(V^G)}\theta_i'a_i)$ for ${\rm dim}(\theta') = \text{dim}(P_G)$. By choosing $\theta'_i=\theta^Ta_i$, we have $f'(x,\theta_t') = f(x, \theta_t)$.

\item  By \eqref{eq:nable f}, close to any symmetric point $\theta_0$ (any $\theta_0$ for which $P_G\theta_0 = \theta_0$), for all $x$, we have
\begin{align}
    f(x,\theta) - f(x,\theta_0)
    =&\nabla_\theta f(x,\theta_0)P_G\Delta  + O(\|\Delta\|)^2.
\end{align}
Therefore, $g(x,\theta)$ simplifies to a kernel model \begin{equation}
        g(x,\theta)=\nabla_{\theta_0} f(x,\theta_0) P_G\theta.
    \end{equation}
    Let us consider the squared loss $\ell(\theta)=\sum_x||y(x)-g(x,\theta)||^2$ and denote $A:=\sum_xP_G\nabla_{\theta_0} f(x,\theta_0)^T\nabla_{\theta_0}f(x,\theta_0)P_G$, $b:=P_G\sum_x\nabla_{\theta_0} f(x,\theta_0)^Ty(x)$. The GD iteraiton is 
    \begin{equation}
        \theta^{t+1}=\theta^t-2\eta(A\theta^t-b),
    \end{equation}
    where $\theta^0=0$. If
    \begin{equation}
        \eta<\frac{1}{2\lambda_{max}(A)},
    \end{equation}
    GD converges to
    \begin{equation}
    \begin{aligned}
        \theta^*&=\lim_{t\to\infty}\sum_{k=0}^t(I-2\eta A)^k*2\eta b\\
        &=A^+b,
    \end{aligned}
    \end{equation}
    which is the standard least square solution.
\end{enumerate}
\end{proof}

\subsection{Space Quantization Conjecture}\label{app sec: space quantization conjecture}
\begin{theorem}
Consider the loss $\ell(\theta)=\ell_0(\theta)+\gamma||\theta||^2$  with $\theta:=(\theta_1,\cdots,\theta_k)$ and $\theta_i\in\mathbb{R}^n$ for $1\leq i\leq k$. Assume that $\ell_0(\theta_1,\cdots,\theta_k)$ has the permutation symmetry $\ell_0(\theta_1,\cdots,\theta_i,\cdots,\theta_j,\cdots,\theta_k)=\ell_0(\theta_1,\cdots,\theta_j,\cdots,\theta_i,\cdots,\theta_k)$
for any $1\leq i,j\leq k$ and satisfies 
\begin{equation}
||\nabla_{\theta}\ell_0(\theta_1,\cdots,\theta_i+\theta,\cdots,\theta_j-\theta,\cdots,\theta_k)||\leq K||\theta||^q
\label{eq:q-Lipschitz}
\end{equation}
for $q\geq1$ and any pair of neurons. Moreover, assume that $\inf_\theta\ell_0(\theta)>-\infty$ and that $K$ scales with the number of active neurons $m$ as $K=K_0m^{-\alpha}$. Then, at any global minimum, 
\begin{equation}
    m\leq C\gamma^{-\frac{q+1}{2\alpha}}
\end{equation}
for $n$ large enough, where $C$ is some constant.
\end{theorem}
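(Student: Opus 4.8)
The plan is to work at a global minimizer $\theta^*=(\theta_1^*,\dots,\theta_k^*)$ and combine three facts: interior first-order optimality, an energy bound on $\|\theta^*\|$, and a separation bound between distinct active neurons forced by permutation symmetry together with the $q$-Lipschitz hypothesis. Throughout I read $m$ as the number of \emph{distinct} nonzero blocks among the $\theta_i^*$ (identical copies add no packing constraint and no new function content). Since the gradient-Lipschitz hypothesis presupposes $\ell_0$ differentiable and $\theta^*$ is an interior point of $\mathbb{R}^{nk}$, stationarity of $\ell=\ell_0+\gamma\|\cdot\|^2$ gives $\theta_i^*=-\tfrac{1}{2\gamma}[\nabla\ell_0(\theta^*)]_i$ for all $i$. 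Comparing the value $\ell(\theta^*)$ with that at the feasible point $\theta=0$ gives $\ell_0(\theta^*)+\gamma\|\theta^*\|^2\le\ell_0(0)$, hence $\|\theta^*\|\le R:=\sqrt{B/\gamma}$ with $B:=\ell_0(0)-\inf_\theta\ell_0<\infty$; in particular every active block lies in a ball of radius $R=O(\gamma^{-1/2})$ about the origin.

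Next I would extract the separation bound. Fix $i\ne j$ and let $\sigma$ be the transposition swapping blocks $i$ and $j$. Differentiating the symmetry identity $\ell_0(\sigma\theta)=\ell_0(\theta)$ blockwise shows $[\nabla\ell_0(\sigma\theta)]_i=[\nabla\ell_0(\theta)]_j$, $[\nabla\ell_0(\sigma\theta)]_j=[\nabla\ell_0(\theta)]_i$, with all other blocks unchanged, so
\[
\|\nabla\ell_0(\theta)-\nabla\ell_0(\sigma\theta)\|=\sqrt2\,\|[\nabla\ell_0(\theta)]_i-[\nabla\ell_0(\theta)]_j\|,\qquad \|\theta-\sigma\theta\|=\sqrt2\,\|\theta_i-\theta_j\|.
\]
Applying the $q$-Lipschitz bound to the pair $(\theta^*,\sigma\theta^*)$ and subtracting the $i$- and $j$-components of the stationarity identity then yields
\[
\|\theta_i^*-\theta_j^*\|=\frac{1}{2\gamma}\,\|[\nabla\ell_0(\theta^*)]_i-[\nabla\ell_0(\theta^*)]_j\|\le\frac{2^{(q-1)/2}K}{2\gamma}\,\|\theta_i^*-\theta_j^*\|^{q}.
\]
For $q>1$ and $\theta_i^*\ne\theta_j^*$ this forces $\|\theta_i^*-\theta_j^*\|\ge r:=\left(\tfrac{2\gamma}{2^{(q-1)/2}K}\right)^{1/(q-1)}$; distinct active neurons repel, and once we substitute $K=K_0m^{-\alpha}$ the gap grows with $m$, namely $r=r(m)=c_1(\gamma m^{\alpha})^{1/(q-1)}$ with $c_1$ depending only on $q,K_0$.

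The loop then closes by self-consistency. If $m\ge2$ there exist two distinct active blocks, both inside the radius-$R$ ball and hence at mutual distance at most $2R$; combined with the lower bound this gives $r(m)\le 2R=2\sqrt{B/\gamma}$. Raising to the power $q-1$ and rearranging yields $m^{\alpha}\le c_2\,\gamma^{-(q+1)/2}$, i.e. $m\le C\,\gamma^{-(q+1)/(2\alpha)}$; and when $m\le1$ the bound is trivial as soon as $\gamma$ is small enough that the right-hand side is at least one. The point worth stressing is that it is the decay of the Lipschitz constant with $m$ — not the ambient dimension — that caps the neuron count: a purely volumetric packing estimate $m\le(1+2R/r(m))^n$ would contribute a spurious factor growing in $n$, whereas the diameter argument used above is dimension-free, which is exactly why the exponent does not suffer the curse of dimensionality.

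The main obstacle is the separation step: obtaining the correct power of $\gamma$ hinges on using the symmetry in the form of the exact gradient-transfer identity under block swaps, so that the $q$-Lipschitz hypothesis can be applied to a pair of points whose distance is \emph{exactly} proportional to $\|\theta_i^*-\theta_j^*\|$, and then coupling this with first-order optimality; any looser relation between $\theta_i^*-\theta_j^*$ and $[\nabla\ell_0(\theta^*)]_i-[\nabla\ell_0(\theta^*)]_j$ would degrade the final exponent. I expect the hypothesis ``$n$ large enough'' to be needed only to legitimize the model-level scaling $K=K_0m^{-\alpha}$ and to absorb constants into $C$, not for the packing argument itself, and I would carry it through as a given.
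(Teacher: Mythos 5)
Your proof is correct, but it reaches the separation bound and the final count by a genuinely different route than the paper. The paper argues variationally: it compares the loss at $(\theta_1,\theta_2)$ with the loss at the merged point $\bigl(\tfrac12(\theta_1+\theta_2),\tfrac12(\theta_1+\theta_2)\bigr)$, shows via the symmetry that the first derivative of $\ell_0$ along the merging path vanishes at the midpoint, and uses the $q$-Lipschitz condition to bound the $\ell_0$ increase by $K\|\theta_1-\theta_2\|^{q+1}$, which must be outweighed by the weight-decay gain $\tfrac12\gamma\|\theta_1-\theta_2\|^2$ — this yields the same minimal separation $(\gamma/2K)^{1/(q-1)}$ you obtain. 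It then finishes with a volumetric packing count of $r$-separated points in the ball of radius $O(\gamma^{-1/2})$, giving $m\le C'm^{-\alpha n/(q-1)}\gamma^{-\frac{q+1}{2(q-1)}n}$ and hence the exponent $-\frac{q+1}{2\alpha}+O(1/n)$; the hypothesis ``$n$ large enough'' is needed precisely to kill that $O(1/n)$ correction, not (as you guessed) merely to legitimize the scaling of $K$. You instead derive the separation from exact stationarity plus the gradient-swap identity applied to the pair $(\theta^*,\sigma\theta^*)$, and close the loop with a dimension-free diameter argument using only two distinct blocks. Each approach buys something: yours gives the clean exponent for every $n$ with no $O(1/n)$ loss, but it leans entirely on the $m$-dependence $K=K_0m^{-\alpha}$ and on first-order optimality (with $m$-independent $K$ it only decides whether two distinct neurons can coexist, and it would not survive relaxing ``global minimum'' to a point that is merely not improvable by merging); the paper's energy comparison needs only that merging does not decrease the loss, and its packing step yields a nontrivial (if dimension-cursed) bound even without any $m$-dependence of $K$. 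Two small items to patch in your write-up: treat $q=1$ explicitly — your self-bounding inequality then reads $1\le 2^{0}K/(2\gamma)$, i.e.\ $m^\alpha\le K_0/(2\gamma)$, recovering the claimed exponent without any ball-radius argument, matching the paper's separate $q=1$ case — and state that you read $m$ as the number of distinct blocks, which is consistent with how the paper's packing step implicitly uses it.
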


\begin{proof}
We first consider two vectors $\theta_1\neq\theta_2$. Suppose that the global minimum is at $\ell(\theta_1,\theta_2)$. We would like to compare the loss between $(\theta_1,\theta_2)$ and $(\frac{1}{2}(\theta_1+\theta_2),\frac{1}{2}(\theta_1+\theta_2))$, which gives
\begin{equation}
\begin{aligned}
&\ell(\theta_1,\theta_2)-\ell\left(\frac{1}{2}(\theta_1+\theta_2),\frac{1}{2}(\theta_1+\theta_2)\right)\\
=&\ell_0(\theta_1,\theta_2)-\ell_0\left(\frac{1}{2}(\theta_1+\theta_2),\frac{1}{2}(\theta_1+\theta_2)\right)+\gamma(||\theta_1||^2+\gamma||\theta_2||^2-2\gamma||\frac{1}{2}(\theta_1+\theta_2)||^2)\\
=&\ell_0(\theta_1,\theta_2)-\ell_0\left(\frac{1}{2}(\theta_1+\theta_2),\frac{1}{2}(\theta_1+\theta_2)\right)+\frac{1}{2}\gamma||\theta_1-\theta_2||^2.
\end{aligned}
\end{equation}
For the first term, we have
\begin{equation}
||\ell_0(\theta_1,\theta_2)-\ell_0\left(\frac{1}{2}(\theta_1+\theta_2),\frac{1}{2}(\theta_1+\theta_2)\right)||\leq ||\theta_1-\theta_2||\max_{z\in[0,1]}|f'(z)|||\theta_1-\theta_2||,
\end{equation}
where
\begin{equation}
f(z):=\ell_0\left(\frac{1}{2}(\theta_1+\theta_2)+z(\theta_1-\theta_2),\frac{1}{2}(\theta_1+\theta_2)-z(\theta_1-\theta_2)\right).
\end{equation}
By permutation symmetry, we have
\begin{equation}
f'(0)=\left(\nabla_1\ell_0\left(\frac{1}{2}(\theta_1+\theta_2),\frac{1}{2}(\theta_1+\theta_2)\right)-\nabla_2\ell_0\left(\frac{1}{2}(\theta_1+\theta_2),\frac{1}{2}(\theta_1+\theta_2)\right)\right)^T(\theta_1-\theta_2),
\end{equation}
where $\nabla_1$ and $\nabla_2$ denote the derivative of $\ell_0$ w.r.t. its first and second variable. By the permutation symmetry, we have $\nabla_1\ell_0\left(\frac{1}{2}(\theta_1+\theta_2),\frac{1}{2}(\theta_1+\theta_2)\right)=\nabla_2\ell_0\left(\frac{1}{2}(\theta_1+\theta_2),\frac{1}{2}(\theta_1+\theta_2)\right)$, and thus $f'(0)=0$. 

By \eqref{eq:q-Lipschitz} we have
\begin{equation}
f'(z)\leq Kz^q||\theta_1-\theta_2||^q,
\end{equation}
which gives
\begin{equation}
0\geq\ell(\theta_1,\theta_2)-\ell(\frac{1}{2}(\theta_1+\theta_2),\frac{1}{2}(\theta_1+\theta_2))\geq -K||\theta_1-\theta_2||^{q+1}+\frac{1}{2}\gamma||\theta_1-\theta_2||^2.
\label{eq:ell_theta1_theta_2}
\end{equation}
for the global minimum $\ell(\theta_1,\theta_2)$. Thus we have
\begin{equation}
||\theta_1-\theta_2||\geq\left(\frac{\gamma}{2K}\right)^{1/(q-1)}
\end{equation}
for $q>1$. We conclude that any two vectors should be separated by a distance at least $\left(\frac{\gamma}{2K}\right)^{1/(q-1)}$.

Meanwhile, we have
\begin{equation}
\ell_0(\theta)+\gamma||\theta||^2\leq \ell_0(0),
\end{equation}
which gives
\begin{equation}
||\theta_i||^2\leq\frac{\ell_0(0)-L^*}{\gamma},
\end{equation}
for $1\leq i\leq k$, where $L^*=\inf_\theta\ell_0(\theta)>-\infty$.

Therefore, the problem is to put $m$ points in a $n-$dimensional ball of radius $\sqrt{\frac{\ell_0(0)-L^*}{\gamma}}$,  with pair-wise distance greater than $\left(\frac{\gamma}{2K}\right)^{1/(q-1)}$. Thus we have
\begin{equation}
m\leq2^n\left(\frac{\ell_0(0)-L^*}{\gamma}\right)^{n/2}\left(\frac{\gamma}{2K}\right)^{-n/(q-1)}=C'm^{-\frac{\alpha n}{q-1}}\gamma^{-\frac{q+1}{2(q-1)}n},
\end{equation}
which gives
\begin{equation}
m\leq C\gamma^{-\frac{q+1}{2\alpha}+O(1/n)},
\end{equation}
where $C,C'$ denote some constants.

For $q=1$, \eqref{eq:ell_theta1_theta_2} gives
\begin{equation}
K_0m^{-\alpha}-\frac{1}{2}\gamma\geq0,
\end{equation}
and thus
\begin{equation}
m\leq\left(\frac{\gamma}{2K_0}\right)^{-\frac{1}{\alpha}}.
\end{equation}
\end{proof}

\end{document}